\documentclass{article}
\usepackage{iclr2026_conference}
\usepackage{iftex}
\ifXeTeX
  \usepackage{newtxtext}
\else
  \usepackage{times}
\fi

\usepackage{amsmath,amsfonts,bm}

\def\eqref#1{equation~\ref{#1}}

\def\1{\bm{1}}

\DeclareMathAlphabet{\mathsfit}{\encodingdefault}{\sfdefault}{m}{sl}
\SetMathAlphabet{\mathsfit}{bold}{\encodingdefault}{\sfdefault}{bx}{n}

\usepackage[hidelinks]{hyperref}
\usepackage{url}
\usepackage{graphicx}
\usepackage{booktabs}
\usepackage{amsmath,amssymb,amsthm}
\newtheorem{theorem}{Theorem}
\usepackage{subcaption}
\usepackage{multirow}
\usepackage{xcolor}
\usepackage{xspace}
\usepackage{wrapfig}

\usepackage{pgfplots}
\pgfplotsset{compat=1.18}
\usepgfplotslibrary{groupplots}

\newif\ifnotes
\notesfalse
\ifnotes
  \ifXeTeX
    \usepackage{xeCJK}
  \fi
\fi

\ifnotes
  \long\def\todo#1{{\color{red} [TODO: #1]}}
  \long\def\note#1{{\color{blue} [note: #1]}}
\else
  \long\def\todo#1{}
  \long\def\note#1{}
\fi

\ifnotes
  
\else
  
\fi

\title{Anchoring Instruction Outside Mask: Exact Reference Caching for Efficient In-Context Diffusion Transformers}

\newif\ifarxiv
\arxivtrue

\newif\ifsubmission
\submissionfalse

\def\realauthors{%
  Yangshuai Liu\textsuperscript{1,*\ensuremath{\ddagger}},
  Zheming Li\textsuperscript{2,*},
  Jiaao Li\textsuperscript{2},
  Kang He\textsuperscript{2},
  Ziliang Lai\textsuperscript{2},
  \\[0.4ex]
  \bfseries
  Zhitai Liu\textsuperscript{1,\ensuremath{\dagger}},
  Chengru Song\textsuperscript{2,\ensuremath{\dagger}}
  \\[0.8ex]
  \normalfont
  \textsuperscript{1}Harbin Institute of Technology
  \qquad
  \textsuperscript{2}KlingAI Research
  \\[0.5ex]
  \normalfont
  \texttt{yangshuai.liu@stu.hit.edu.cn,\ lizheming@klingai.com}
  \\
  \normalfont
  \texttt{ztliu@hit.edu.cn,\ songchengru@klingai.com}
}

\newcommand{\authorfootnotes}{%
  \begingroup
    \renewcommand{\thefootnote}{\fnsymbol{footnote}}%
    \footnotetext[1]{Equal contribution.}%
    \footnotetext[2]{Corresponding authors.}%
    \footnotetext[3]{Work done during an internship at KlingAI Research.}%
  \endgroup
}

\iclrfinalcopy
\author{\realauthors}


\begin{document}
\maketitle
\authorfootnotes
\ifarxiv\lhead{}\fi

\begin{abstract}
Omnimodal generation is central to a wide range of content creation and editing applications. In-context conditioning is essential to this paradigm. It allows diffusion transformers to process text instructions and visual references in a shared attention sequence. However, each reference image introduces thousands of tokens. Computation therefore grows rapidly with the number of references. Existing methods reduce computation through structured sparse attention, which limits interactions between reference and target tokens. This structure also makes the reference K and V independent of the denoising target, allowing them to be computed once and reused across steps. However, it blocks visual references from attending to the text instruction. This substantially degrades instruction following and reference fidelity in multi-reference editing.
To resolve this conflict, we jointly redesign the token sequence and attention mask. Our beyond-mask design uses static text anchors to connect the instruction to the reference branch. It preserves exact K and V reuse without adding parameters. However, this direct architectural conversion degrades generation quality. We recover the lost performance through teacher-forced velocity distillation, followed by a short on-policy stage in which the teacher supervises student-visited states. To our knowledge, this is the first use of on-policy distillation for architectural recovery in diffusion models.
Across three image-editing benchmarks, our method matches full-attention generation quality. With five reference images, it accelerates the complete 40-step denoising process by \(3.92\times\), while static text anchors introduce negligible runtime overhead; the speedup reaches \(5.47\times\) at ten references in our scaling study.
\end{abstract}

\section{Introduction}
\label{sec:intro}

Modern omnimodal systems aim to handle text, images, audio, and video within a unified generative model. In-context conditioning provides a flexible way to combine these heterogeneous inputs. Instead of assigning each modality a separate interface, the model processes them jointly as context.

Diffusion transformers commonly implement this idea by concatenating text, target-image, and reference-image tokens\citep{peebles2023dit,esser2024sd3,liu2025cobra}. Bidirectional attention then allows information to flow among the instruction, evolving target, and
visual references. This flexibility is computationally expensive. At
$1024^2$ resolution, one reference image contributes roughly $4$k tokens.
When reference states depend on the evolving target, the model must
recompute them at every denoising step. The cost quickly increases as more
references are added.

Structured sparse attention offers a direct way to avoid this repeated
computation. It isolates the reference-image tokens from the denoising
target, making their layerwise K and V independent of the current step.
These K and V can then be precomputed and reused exactly throughout
denoising \citep{tan2025ominicontrol2,zhang2025easycontrol}.

However, this isolation removes an important information path. Under
bidirectional attention, the instruction tokens receive information from the
evolving target. If the reference-image tokens attend to these updated
instruction states, target information also flows into the reference-image
states. Their K and V then become step-dependent and can no longer be reused
exactly. Blocking this connection preserves exact caching, but prevents the
reference-image representations from identifying which visual content is
relevant to the instruction. This limitation substantially harms instruction
following and reference fidelity when multiple references are provided.

This conflict cannot be resolved by changing the mask over the original
token sequence alone. Under existing structured sparse designs, any path
from the updated instruction states to the reference-image tokens also
carries information from the evolving target. Keeping this path makes the
reference K and V step-dependent, while removing it blocks instruction
access. 

To resolve this conflict, we propose a beyond-mask design that extends the
token sequence while preserving a regular attention structure. More complex
masks can break regularity, require custom kernels, and lose compatibility
with optimized implementations such as FlashAttention
\citep{dao2022flashattention}. By contrast, our design preserves a regular
attention layout and can directly use these high-performance kernels. Exact
caching therefore delivers practical speedups without specialized kernel
development.

Specifically, we instantiate a static instruction pathway at $t=0$ using
static text anchors. During reference precomputation, these anchors transfer
instruction information to the reference-image tokens without creating a
dependency on the evolving target. The anchors are removed once the
reference K and V have been constructed and do not participate in
denoising. This pathway reuses the existing text branch and introduces no
new parameters.

Changing the attention structure creates a mismatch between the inherited
model weights and the new information flow, leading to an initial drop in
generation quality. To recover this loss, we apply teacher-forced velocity
distillation at data-derived states. This stage restores most of the lost
performance, but eventually plateaus because inference follows the student's
own trajectory.
We address this remaining state-distribution gap with a short on-policy
stage. We roll out the student and ask the full-attention teacher to provide
velocity targets at the visited states. Training on these states directly
corrects errors along the student's inference trajectory. To our knowledge,
this is the first use of on-policy distillation for architectural recovery in
diffusion models.

Because our method operates on the reference-side dependency structure, it is orthogonal to complementary diffusion acceleration techniques such as temporal caching, and sparse attention, and can in principle be combined with them.

Our contributions are the following:

\begin{itemize}
  \item We introduce a beyond-mask design with parameter-free static text
  anchors. They make the reference K and V instruction-aware while preserving
  exact reuse across denoising steps.

  \item We develop a two-stage recovery procedure that combines teacher-forced
  velocity distillation with a short on-policy stage at student-visited states.
  To our knowledge, this is the first use of on-policy distillation for
  architectural recovery in diffusion models.

  \item Our method matches the full-attention baseline across three
  image-editing benchmarks and multiple evaluation settings. With five reference images, it accelerates the complete 40-step denoising process by
  $3.92\times$ with negligible overhead from static text anchors; the speedup
  reaches $5.47\times$ at ten references in the scaling study.
\end{itemize}

\section{Related Work}
\label{sec:related}

\paragraph{Efficient diffusion transformers.}
Existing approaches accelerate diffusion transformers mainly by reducing
attention computation or reusing intermediate states. Sparse-attention methods
reduce computation by selecting only a subset of token interactions
\citep{zhang2025spargeattention,xi2025sparsevideogen,
li2025radialattention,wu2025vmoba,zhang2026spargeattention2}.
Their sparse patterns are often dynamic and irregular, which complicates kernel
and system design. Consequently, high sparsity may not translate into
proportional runtime speedups.

Temporal caching reuses activations across nearby denoising steps
\citep{selvaraju2024fora,liu2024smoothcache,ma2024l2c,cui2026bwcache}.
This reuse is approximate and may reduce generation fidelity. Structural
methods instead isolate fixed condition tokens from the denoising target.
This enables exact K and V reuse
\citep{zhang2025easycontrol,tan2025ominicontrol2}, but requires additional
conditioning modules.

\paragraph{Recovery after architectural changes.}
Modifying a pretrained diffusion architecture often causes an initial quality
drop. Lightweight attention changes can often be recovered through short
fine-tuning or distillation \citep{zhang2026spargeattention2}. More substantial
changes replace softmax attention with linear or convolutional operators and
typically require longer retraining
\citep{liu2024clear,dong2025convfusion,becker2025edit,wang2025lit}.
These methods generally learn from teacher-forced examples drawn from a fixed
training distribution.

\paragraph{On-policy distillation.}
On-policy distillation trains students on states induced by their own
trajectories and has been widely studied in language models
\citep{agarwal2024gkd,gu2024minillm,ko2025distillm2,qwen2025qwen3,
lu2025onpolicydistillation}.
Related ideas have also been explored in generative vision for few-step
diffusion and autoregressive video generation
\citep{song2023consistency,yin2025causvid,huang2025selfforcing,
cui2025selfforcingpp}.
These works primarily target sampling acceleration or generative training,
rather than recovery from an attention-structure modification.

\section{Methodology}
\label{sec:method}
\begin{figure*}[t]
  \centering
  \includegraphics[width=\textwidth]{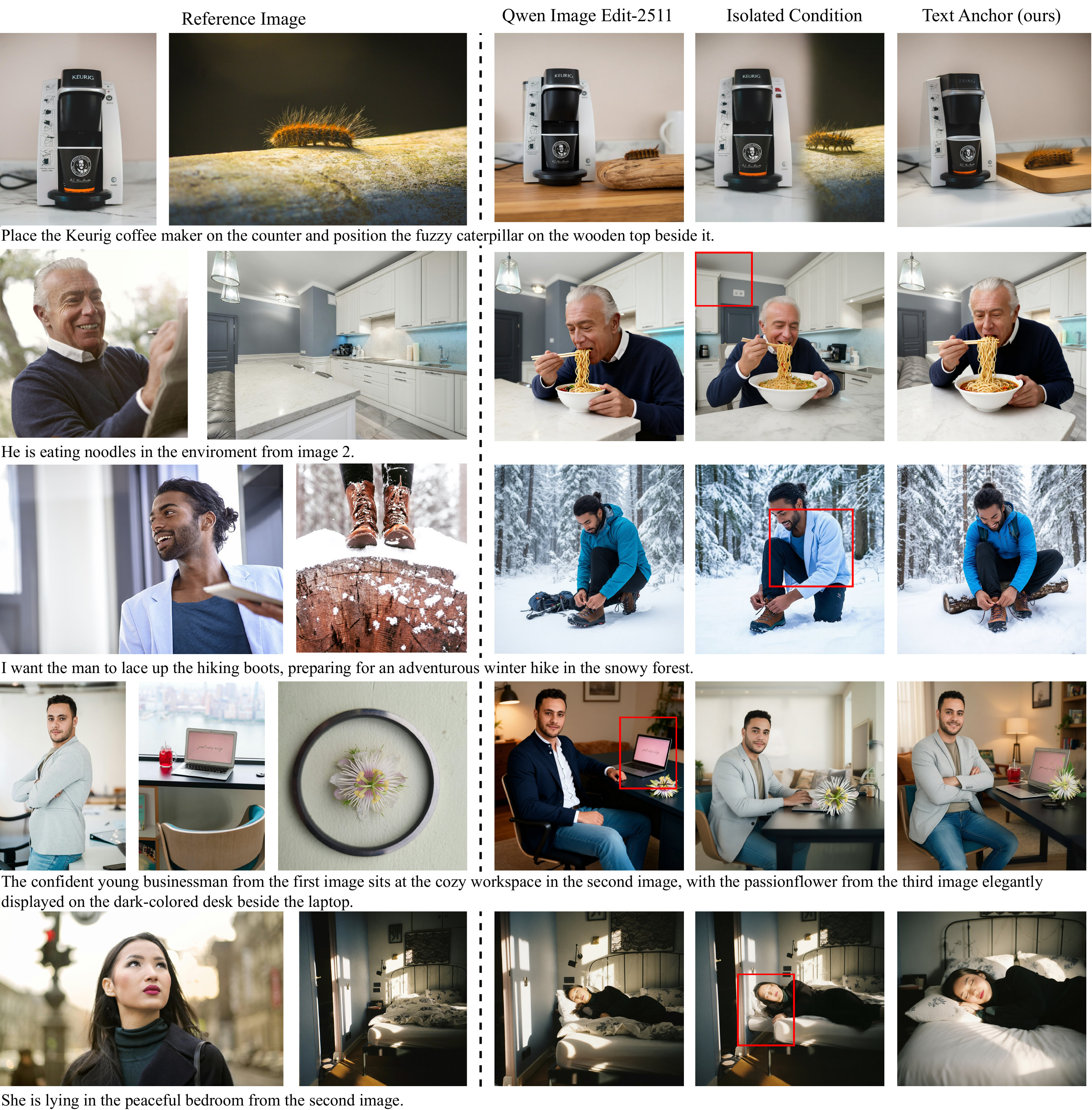}
  \caption{
  \textbf{Qualitative comparison on OmniContext.}
  Each row shows the reference images and outputs from the full-attention base
  model, isolated cache, and our text-anchor model. The isolated cache often
  follows the instruction but fails to preserve the referenced subjects.
  Static text anchors restore reference fidelity while maintaining instruction
  following. Red boxes highlight identity inconsistencies.
  }
  \label{fig:qualitative}
\end{figure*}
\subsection{Preliminaries}
\label{sec:prelim}

\paragraph{In-context diffusion transformers.}
At each denoising step, an in-context diffusion transformer jointly processes
a text instruction, an evolving target, and one or more visual references
\citep{peebles2023dit,esser2024sd3,tan2025ominicontrol,
wu2025qwenimage}. Let $\mathrm{LT}\in\mathbb{R}^{N_{\mathrm{LT}}\times d}$
denote the live-text tokens from the DiT text branch,
$X_t\in\mathbb{R}^{N_X\times d}$ the denoising-target tokens, and
$\mathrm{R}^{(k)}\in\mathbb{R}^{N_{\mathrm{R}}^{(k)}\times d}$ the tokens of
the $k$-th reference image. Here, $d$ is the hidden dimension.

For $K$ reference images, their tokens are concatenated into
$\mathrm{R}$, followed by the complete model input:

\begin{equation}
  \mathrm{R}
  =
  [\mathrm{R}^{(1)};\ldots;\mathrm{R}^{(K)}],
  \qquad
  \mathbf{S}_t
  =
  [\mathrm{LT};X_t;\mathrm{R}].
  \label{eq:incontext-sequence}
\end{equation}

The instruction and reference inputs remain fixed during denoising, whereas
$X_t$ evolves at every step. Their hidden states, however, may still change
through attention to $X_t$.

\paragraph{Exact reference caching.}
Let $\mathrm{R}^{(\ell)}(t)$ denote the reference states at transformer layer
$\ell$ and denoising step $t$. Their K and V are

\begin{equation}
  K_{\mathrm{R}}^{(\ell)}(t)
  =
  \mathrm{R}^{(\ell)}(t)W_K^{(\ell)},
  \qquad
  V_{\mathrm{R}}^{(\ell)}(t)
  =
  \mathrm{R}^{(\ell)}(t)W_V^{(\ell)}.
  \label{eq:reference-kv}
\end{equation}

We call $\mathrm{R}$ \emph{exactly cacheable} when these K and V remain
unchanged across all denoising steps. This property holds by construction if
the reference input and timestep are fixed and no information from $X_t$ or
other step-dependent states reaches $\mathrm{R}$. The reference K and V can
then be computed once and reused throughout denoising.

\paragraph{On-policy distillation in language models.}
Consider an autoregressive teacher $p_{\mathrm{T}}$ and student
$p_\theta$. Given a prompt $c$ and a response
$y=(y_1,\ldots,y_L)$ from a fixed dataset, teacher-forced distillation
matches the teacher and student distributions along dataset prefixes:

\begin{equation}
  \mathcal{L}_{\mathrm{TF}}
  =
  \mathbb{E}_{(c,y)\sim\mathcal{D}}
  \left[
    \frac{1}{L}
    \sum_{j=1}^{L}
    D\!\left(
      p_{\mathrm{T}}(\cdot\mid c,y_{<j}),
      p_\theta(\cdot\mid c,y_{<j})
    \right)
  \right],
  \label{eq:llm-teacher-forced}
\end{equation}

where $D$ denotes a divergence between the teacher and student
distributions. At inference time, however, the student conditions on its own
previous predictions, so the prefixes it encounters may differ from those in
the training data. On-policy distillation addresses this mismatch by first
rolling out the student,
$\hat y\sim p_\theta(\cdot\mid c)$, and then matching the teacher and student
on the resulting student-generated prefixes:

\begin{equation}
  \mathcal{L}_{\mathrm{OPD}}
  =
  \mathbb{E}_{c\sim\mathcal{D}_c}
  \mathbb{E}_{\hat y\sim p_\theta(\cdot\mid c)}
  \left[
    \frac{1}{|\hat y|}
    \sum_{j=1}^{|\hat y|}
    D\!\left(
      p_{\mathrm{T}}(\cdot\mid c,\hat y_{<j}),
      p_\theta(\cdot\mid c,\hat y_{<j})
    \right)
  \right],
  \label{eq:llm-opd}
\end{equation}

where $\mathcal{D}_c$ is the marginal distribution over prompts.
The key difference is therefore where distillation is performed:
teacher-forced distillation operates on prefixes from the data distribution,
whereas on-policy distillation operates on prefixes induced by the student
itself. Training on student rollouts exposes the model to the states it is
likely to encounter at inference time, thereby reducing the train--inference
mismatch. The same principle extends to iterative diffusion models, where the states
visited by the student during generation can deviate from those produced by
the teacher.

\subsection{Missing Instruction Path}
\label{sec:constraint}

Under full attention, the reference-image block ($\mathrm{R}$) receives information from both live text
($\mathrm{LT}$) and the target ($X_t$). Structured sparse attention removes the target
dependency to make $\mathrm{R}$ cacheable, but also blocks the instruction path
needed to identify instruction-relevant visual content.

\paragraph{Dependency-graph analysis.}
Consider structured attention over the original sequence
$[\mathrm{LT},X_t,\mathrm{R}]$, with the same connectivity used across
transformer layers. Let $\mathrm{LT}^{(\ell)}$, $X_t^{(\ell)}$, and
$\mathrm{R}^{(\ell)}$ denote the hidden states after layer $\ell$. We write
$u\rightarrow v$ for a direct attention dependency from $u$ to $v$.

Bidirectional text--target attention creates
$X_t^{(\ell-1)}\rightarrow\mathrm{LT}^{(\ell)}$. If the reference tokens
attend to these live-text states, the graph also contains
$\mathrm{LT}^{(\ell)}\rightarrow\mathrm{R}^{(\ell+1)}$. These dependencies
compose across layers:

\begin{equation}
  X_t^{(\ell-1)}
  \rightarrow
  \mathrm{LT}^{(\ell)}
  \rightarrow
  \mathrm{R}^{(\ell+1)}.
  \label{eq:target-text-reference}
\end{equation}

Therefore, although the instruction input is fixed, the reference states
depend on the evolving target. Their K and V consequently change across
denoising steps and cannot be reused exactly.

Exact caching requires removing every path from $X_t$ to $\mathrm{R}$. Since
the text--target interaction is preserved across layers, this also prevents
$\mathrm{R}$ from accessing the live-text states. Thus, under the structured
connectivity considered here, mask design over the original sequence cannot
provide both live instruction access and exact reference caching. As shown in
Figure~\ref{fig:qualitative}, blocking this instruction path degrades
reference fidelity and can also hurt instruction following.

\begin{figure}[t]
  \centering
  \includegraphics[width=\linewidth]{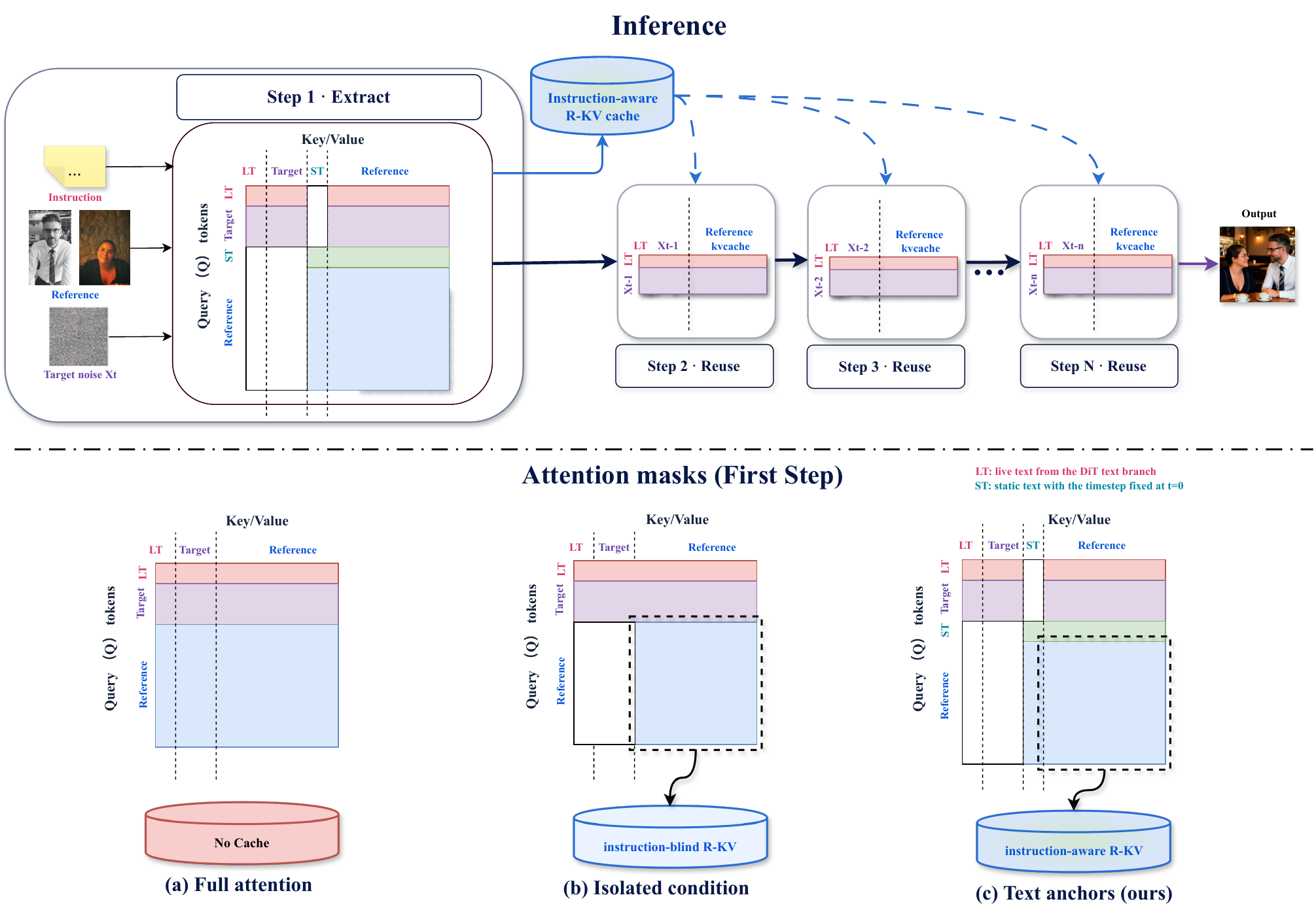}
  \caption{
  \textbf{Overview of instruction-aware exact caching.}
  Top: Static text anchors condition the reference branch once during cache
  construction. Only the resulting reference K and V are retained and reused
  across denoising steps.
  Bottom: Comparison of attention structures, where rows denote queries and
  columns denote keys and values.
  }
  \label{fig:overview}
\end{figure}

\subsection{Static Text Anchors}
\label{sec:anchor}

Instead of limiting the design to masks over the original token sequence, we
extend the attention graph with a new static instruction path. This path is
formed by static text anchors, denoted by $\mathrm{ST}$, whose timestep is
fixed to $t_{\mathrm{ST}}=0$. The resulting token sequence is
$[\mathrm{LT},X_t,\mathrm{ST},\mathrm{R}]$.

We connect $\mathrm{R}$ and $\mathrm{ST}$ bidirectionally, while blocking both
from receiving information from $\mathrm{LT}$ or $X_t$. The original branch
preserves the pretrained interaction between $\mathrm{LT}$ and $X_t$ and
continues to attend to $\mathrm{R}$, but not to $\mathrm{ST}$. The resulting
graph adds the instruction path $\mathrm{ST}\rightarrow\mathrm{R}$ without
opening a target path $X_t\rightarrow\mathrm{R}$.

The static subgraph $[\mathrm{ST},\mathrm{R}]$ depends only on the instruction
and reference image, so it is evaluated once before denoising. We cache only
the resulting reference K and V and discard the temporary $\mathrm{ST}$ states.
The cached K and V are therefore instruction-aware, target-independent, and
exactly reusable across all denoising steps.

This construction adds no parameters, as $\mathrm{ST}$ reuses the existing
instruction embeddings, attention projections, and transformer weights. We
set $t_{\mathrm{ST}}=0$ to align $\mathrm{ST}$ with the clean reference
branch.

Static text anchors add only a one-time precomputation cost. Let
$N_{\mathrm{R}}$ and $N_{\mathrm{ST}}$ denote the numbers of reference and
static-text tokens. Since $\mathrm{ST}$ contains only instruction tokens,
$N_{\mathrm{ST}}\ll N_{\mathrm{R}}$. Relative to an isolated reference cache,
the additional attention interactions are

\begin{equation}
  \Delta\mathcal{A}_{\mathrm{ST}}
  =
  2N_{\mathrm{R}}N_{\mathrm{ST}}
  +
  N_{\mathrm{ST}}^2.
  \label{eq:anchor-overhead}
\end{equation}

This cost is paid only during cache construction. Since $\mathrm{ST}$ is
discarded before denoising, its runtime overhead is negligible.
Equation~\ref{eq:anchor-overhead} counts attention interactions rather than
end-to-end FLOPs; measured latency is reported in
Section~\ref{sec:exp-quality}.

\begin{figure}[t]
  \centering
  \includegraphics[width=\linewidth]{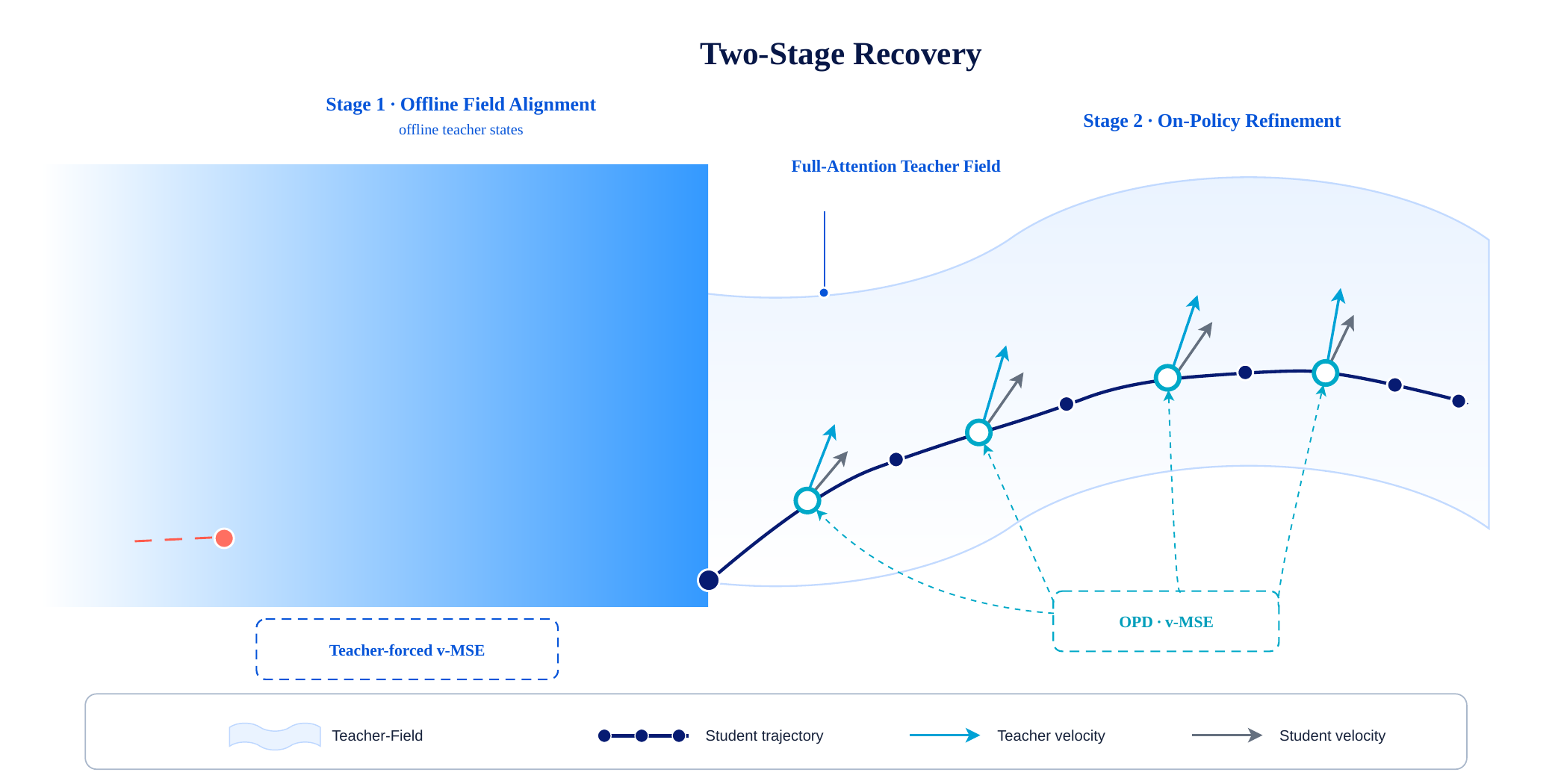}
  \caption{
  \textbf{Two-stage architectural recovery.}
  Stage~1 queries the teacher at data-derived interpolants.
  Stage~2 queries the same teacher at states visited by the student and
  corrects residual errors along the student's inference trajectory.
  }
  \label{fig:training}
\end{figure}

\subsection{Two-Stage Architectural Recovery}
\label{sec:recovery}

Although the beyond-mask design enables instruction-aware exact caching,
directly converting the full-attention architecture degrades generation
quality. We recover the lost performance through two-stage distillation.
The first stage applies teacher-forced velocity distillation at data-derived
states. The second stage applies on-policy distillation at student-visited
states to correct errors along the student's own inference trajectory.
Figure~\ref{fig:training} summarizes this procedure.

Let $v_{\mathrm{full}}$ denote the frozen full-attention teacher and
$v_\theta$ the student with the cacheable architecture. The student is
initialized from the teacher weights; only its token sequence and attention
graph are changed.

\paragraph{Stage 1: teacher-forced velocity distillation.}
We first align the student with the teacher at data-derived states. Given a clean sample $x_0$ and Gaussian noise
$x_1\sim\mathcal{N}(0,I)$, we construct the flow-matching interpolant
$x_t=(1-t)x_0+t x_1$ \citep{lipman2023flowmatching} and minimize

\begin{equation}
  \mathcal{L}_{\mathrm{TF}}
  =
  \mathbb{E}_{x_0,x_1,c,t}
  \left[
  \left\|
  v_\theta(x_t,c,t)
  -
  v_{\mathrm{full}}(x_t,c,t)
  \right\|_2^2
  \right],
  \label{eq:teacher-forced}
\end{equation}

where $c$ contains the instruction and reference images.

The data-derived interpolants define the query states, while the teacher
provides the velocity targets. This stage recovers most of the performance
lost during architectural conversion. However, it does not train the student
on states reached by its own sampler. Once teacher-forced training plateaus,
residual errors can therefore remain along the student trajectory.

\paragraph{Stage 2: on-policy refinement.}
We roll out the current student from
$z_1\sim\mathcal{N}(0,I)$ and collect the visited states $z_t^\theta$. Let
$\bar z_t=\operatorname{sg}(z_t^\theta)$ denote the same states with
stop-gradient. We query the teacher at these states and optimize

\begin{equation}
  \mathcal{L}_{\mathrm{OPD}}
  =
  \mathbb{E}_{z_1,c,t}
  \left[
  \left\|
  v_\theta(\bar z_t,c,t)
  -
  v_{\mathrm{full}}(\bar z_t,c,t)
  \right\|_2^2
  \right].
  \label{eq:opd}
\end{equation}

The regression target is unchanged from Stage~1. Only the query-state
distribution changes from data-derived interpolants to student-visited
states. Stop-gradient avoids backpropagation through the sampler. This short
on-policy stage corrects the remaining errors along the student's own
inference trajectory.

\section{Experiments}
\label{sec:exp}

\subsection{Experimental Setup}
\label{sec:exp-setup}

\paragraph{Models.}
We evaluate our method on Qwen-Image-Edit-2511
\citep{wu2025qwenimage}. The full-attention teacher uses the officially
released weights. The isolated cache adopts the cacheable attention topology
used in prior work \citep{zhang2025easycontrol,tan2025ominicontrol2}, while our
model augments the same topology with static text anchors. Both students are
initialized from the teacher weights.

\paragraph{Benchmarks and metrics.}
 We evaluate generation quality on OmniContext \citep{wu2025omnigen2},
GEdit-Bench \citep{liu2025step1xedit}, and ImgEdit-Bench
\citep{ye2025imgedit}. OmniContext evaluates prompt following (PF) and subject
consistency (SC) on a $0$--$10$ scale, with Overall obtained by averaging the
per-example geometric mean of PF and SC. GEdit-Bench evaluates semantic
consistency (SC) and perceptual quality (PQ), and similarly forms its Overall
score from their per-example geometric mean before aggregation across editing
categories. ImgEdit-Bench evaluates instruction adherence, image-editing
quality, and detail preservation on a $1$--$5$ scale, using their per-example
arithmetic mean as the editing score before category-level aggregation.
We follow the official evaluation and aggregation protocol of each benchmark.

\paragraph{Implementation Details.}
Both recovery stages use OmniEdit \citep{wei2025omniedit} and OrionEdit
\citep{jiang2026orionedit}, with a global batch size of $8$ and a learning rate
of $5\times10^{-6}$. Stage~1 performs teacher-forced velocity distillation for
$30$k steps. Stage~2 starts from the $30$k checkpoint and performs $500$
on-policy updates. Each rollout contains $40$ denoising steps, from which we
sample $K=4$ query states across the full trajectory. Additional implementation
details are provided in Appendix~\ref{app:details}.

\subsection{Main Results}
\label{sec:exp-quality}

Tables~\ref{tab:main} and~\ref{tab:robust} compare our method with the
full-attention base model and the isolated cache, which serve as our controlled
architectural baselines. We additionally report representative diffusion
inference acceleration techniques spanning quantization, temporal caching,
efficient attention, and token pruning as practical reference points.
Table~\ref{tab:main} reports detailed OmniContext results together with
end-to-end efficiency, while Table~\ref{tab:robust} extends the quality
comparison to GEdit-Bench and ImgEdit-Bench.

\begin{table*}[t]
  \caption{
\textbf{Generation quality on OmniContext and end-to-end efficiency.}
\emph{All 400} evaluates the complete benchmark.
End-to-end latency is measured over the complete 40-step generation process
with five reference images, and speedup is relative to the full-attention
base model.
}
  \label{tab:main}
  \centering
  \scriptsize
  \setlength{\tabcolsep}{3.4pt}
  \renewcommand{\arraystretch}{1.16}
  \resizebox{\textwidth}{!}{%
  \begin{tabular}{@{}l cc cc ccc cc@{}}
    \toprule
    & \multicolumn{2}{c}{\textsc{Multiple}}
    & \multicolumn{2}{c}{\textsc{Scene}}
    & \multicolumn{3}{c}{\emph{All 400}}
    & \multicolumn{2}{c}{Efficiency} \\
    \cmidrule(lr){2-3}
    \cmidrule(lr){4-5}
    \cmidrule(lr){6-8}
    \cmidrule(lr){9-10}
    Method
      & PF $\uparrow$ & SC $\uparrow$
      & PF $\uparrow$ & SC $\uparrow$
      & PF $\uparrow$ & SC $\uparrow$ & Overall $\uparrow$
      & Latency (s) $\downarrow$
      & Speedup $\uparrow$ \\
    \midrule

    Base (Qwen-Image-Edit)
      & \underline{8.660} & 8.020
      & \textbf{8.180} & \underline{7.693}
      & \textbf{8.550} & 7.867 & \underline{8.119}
      & 370.0 & 1.00$\times$ \\

    \addlinespace[2pt]
    SVDQuant (FP4, $r{=}32$) \citep{li2025svdquant}
      & 7.913 & 7.807
      & 7.247 & 6.967
      & 7.888 & 7.520 & 7.582
      & 303.3 & 1.22$\times$ \\

    TeaCache \citep{liu2025teacache}
      & 8.513 & 7.873
      & 7.813 & 7.427
      & 8.330 & 7.675 & 7.896
      & 93.8 & 3.94$\times$ \\

    SageAttention2 \citep{zhang2025sageattention2}
      & \textbf{8.720} & \underline{8.280}
      & \underline{7.993} & 7.580
      & \underline{8.495} & \underline{7.923} & 8.105
      & 283.5 & 1.31$\times$ \\

    ToPi \citep{lin2026topi}
      & 8.580 & 8.107
      & 7.940 & 7.447
      & 8.400 & 7.692 & 7.891
      & 196.6 & 1.88$\times$ \\

    \midrule

    Isolated cache
      & 8.053 & 7.967
      & 7.720 & 7.287
      & 8.155 & 7.490 & 7.680
      & 94.1 & 3.93$\times$ \\

    \textbf{Text anchors (ours)}
      & 8.573 & \textbf{8.453}
      & 7.913 & \textbf{7.947}
      & 8.418 & \textbf{8.167} & \textbf{8.185}
      & 94.4 & 3.92$\times$ \\

    \bottomrule
  \end{tabular}}
\end{table*}

\begin{table*}[t]
  \caption{
  \textbf{Cross-benchmark image-editing results.}
  Results are reported for the full-attention base model and both cacheable
  architectures after Stage~1 and Stage~2 recovery.
  }
  \label{tab:robust}
  \centering
  \small
  \setlength{\tabcolsep}{4.5pt}
  \renewcommand{\arraystretch}{1.12}
  \begin{tabular}{@{}l ccc c ccc@{}}
    \toprule
    & \multicolumn{3}{c}{OmniContext}
    & \multicolumn{1}{c}{ImgEdit-Bench}
    & \multicolumn{3}{c}{GEdit-Bench} \\
    \cmidrule(lr){2-4}
    \cmidrule(lr){5-5}
    \cmidrule(lr){6-8}
    Method
      & PF $\uparrow$
      & SC $\uparrow$
      & Overall $\uparrow$
      & Overall $\uparrow$
      & SC $\uparrow$
      & PQ $\uparrow$
      & Overall $\uparrow$ \\
    \midrule

    Base
      & \textbf{8.550}
      & 7.867
      & \underline{8.119}
      & \underline{4.324}
      & \textbf{8.097}
      & \textbf{7.303}
      & \textbf{7.574} \\

    \addlinespace[2pt]
    Isolated, Stage~1
      & 8.180
      & 7.203
      & 7.529
      & 4.184
      & 7.929
      & 7.181
      & 7.440 \\

    Isolated, $+$ Stage~2
      & 8.155
      & 7.490
      & 7.680
      & 4.219
      & 7.911
      & \underline{7.290}
      & 7.475 \\

    \addlinespace[2pt]
    Text anchors, Stage~1
      & 8.285
      & \underline{7.970}
      & 8.019
      & 4.293
      & 7.940
      & 7.252
      & 7.460 \\

    Text anchors, $+$ Stage~2
      & \underline{8.418}
      & \textbf{8.167}
      & \textbf{8.185}
      & \textbf{4.351}
      & \underline{8.059}
      & 7.269
      & \underline{7.540} \\

    \bottomrule
  \end{tabular}
\end{table*}

\paragraph{End-to-end efficiency.}
With eight reference images, text anchors reduce the complete 40-step
generation latency from $370.0$ to $94.4$ seconds, corresponding to a
$3.92\times$ speedup. The isolated cache requires $94.1$ seconds under the
same setting, so adding static text anchors costs only $0.3$ seconds while
preserving the efficiency of the cacheable topology. Because our method removes
repeated reference-side computation, it is complementary to acceleration
techniques that reduce the cost of the remaining computation.

\paragraph{Quality preservation.}
Text anchors retain full-attention-level generation quality despite the large
speedup, achieving an Overall score of $8.185$ compared with $8.119$ for the
full-attention base model. More importantly, at nearly identical latency, they
improve the isolated cache from $7.680$ to $8.185$, including a $0.677$ gain
in subject consistency. Among the practical acceleration references, TeaCache
reaches a similar $3.94\times$ speedup with an Overall score of $7.896$,
whereas SageAttention2 preserves an Overall score of $8.105$ at a
$1.31\times$ speedup. These methods operate on complementary aspects of
diffusion inference and are not direct architectural counterparts to ours.

\paragraph{Results across benchmarks.}
Text anchors improve over the isolated cache on both ImgEdit-Bench and
GEdit-Bench under both recovery stages. The final model closely matches the
full-attention base model across all three benchmarks: Overall scores are
$8.185$ versus $8.119$ on OmniContext, $4.351$ versus $4.324$ on
ImgEdit-Bench, and $7.540$ versus $7.574$ on GEdit-Bench. The same conclusion
therefore holds across different instruction distributions, metrics, and
evaluation protocols. Together, these results show that exact reference reuse
can provide substantial acceleration without sacrificing generation quality,
offering a practical route to efficient multi-reference diffusion inference.

\subsection{Ablation Study}
\paragraph{On-Policy Recovery}
\label{sec:exp-opd}

We examine whether further teacher-forced distillation can recover the
remaining quality gap, or whether supervision on student-visited states is
necessary. Starting from the same $30$k Stage~1 checkpoint, we compare
continued teacher-forced distillation with Stage~2 on-policy recovery.
Continued teacher-forced distillation remains nearly saturated: across the
30k--33k checkpoints, the Overall score stays within $[7.998, 8.034]$, with
a mean of $8.016\pm0.015$.

\begin{figure}[htbp]
    \centering
    \includegraphics[width=0.98\linewidth]
    {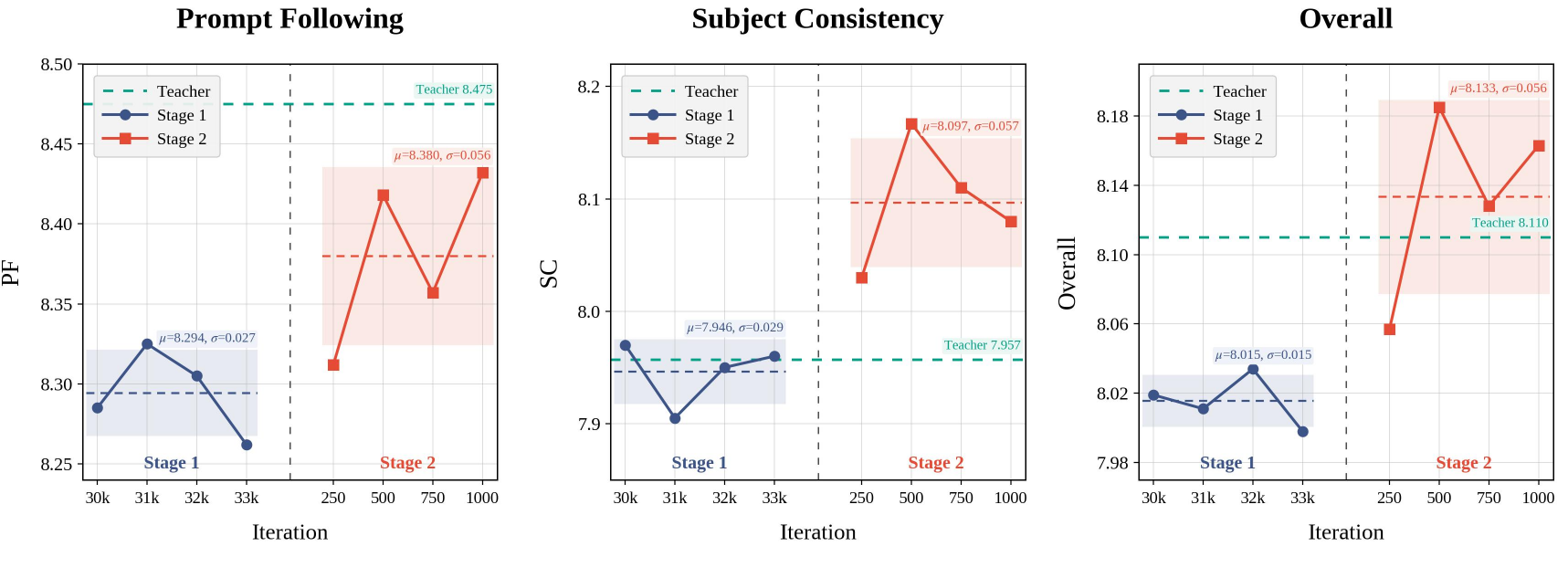}
    \caption{
    \textbf{Teacher-forced saturation and on-policy recovery.}
    Starting from the same 30k Stage~1 checkpoint, we compare continued
    teacher-forced training with Stage~2 on-policy recovery.
    Shaded regions denote mean $\pm$ standard deviation, and the green
    dashed line indicates the full-attention teacher.
    }
    \label{fig:recovery_curves}
\end{figure}

In contrast, every evaluated Stage~2 checkpoint starting from the same
30k model exceeds the best continued teacher-forced checkpoint. The
improvement is already visible after $250$ on-policy updates, where Overall
reaches $8.057$, and increases to $8.185$ after $500$ updates. Similar trends
are observed for both PF and SC. These results suggest that the remaining error is not primarily resolved by
additional teacher-forced optimization. Instead, exposing the model to states
visited by its own denoising trajectory provides a stronger recovery signal.

\paragraph{OPD Query Design}
\label{sec:exp-query}

We further vary the query interval and the number of queries $K$ sampled from
each student rollout. All configurations start from the same Stage~1 checkpoint,
whose Overall score is $8.019$. The reported $\Delta$ is measured relative to
this starting point.

The effect of $K$ depends on the query interval: increasing the number of
queries does not uniformly improve recovery. Full-trajectory sampling performs
best for both $K=1$ and $K=4$, with the full-trajectory, $K=4$ configuration
achieving the highest Overall score. We therefore use full-trajectory sampling
with $K=4$ as the default OPD configuration. These experiments select the
recovery procedure without assigning a separate semantic role to each noise
interval.

\begin{figure}[htbp]
\centering

\begin{minipage}[t]{0.48\textwidth}
    \vspace{0pt}
    \centering
    \captionof{table}{
\textbf{OPD query design.}
All configurations are initialized from the same 30k Stage~1 checkpoint
and evaluated after 500 on-policy updates.
Overall scores are reported for different query intervals and numbers of
sampled states $K$, with $\Delta$ measured from the Stage~1 starting point
($8.019$).
}
    \label{tab:opd-query}

    \vspace{0.3em}

    \scriptsize
    \setlength{\tabcolsep}{2.5pt}
    \begin{tabular}{@{}lcccc@{}}
        \toprule
        & \multicolumn{2}{c}{$K=1$}
        & \multicolumn{2}{c}{$K=4$} \\
        \cmidrule(lr){2-3}
        \cmidrule(lr){4-5}
        Query interval
          & Overall & $\Delta$
          & Overall & $\Delta$ \\
        \midrule
        Full trajectory
          & 8.084 & $+0.065$
          & \textbf{8.185} & \textbf{$+0.166$} \\
        High-noise
          & 8.060 & $+0.041$
          & 8.131 & $+0.112$ \\
        Mid-noise
          & 8.073 & $+0.054$
          & 8.051 & $+0.032$ \\
        Low-noise
          & 7.936 & $-0.083$
          & 7.888 & $-0.131$ \\
        \bottomrule
    \end{tabular}
\end{minipage}
\hfill
\begin{minipage}[t]{0.48\textwidth}
    \vspace{0pt}
    \centering

    \includegraphics[width=\linewidth]{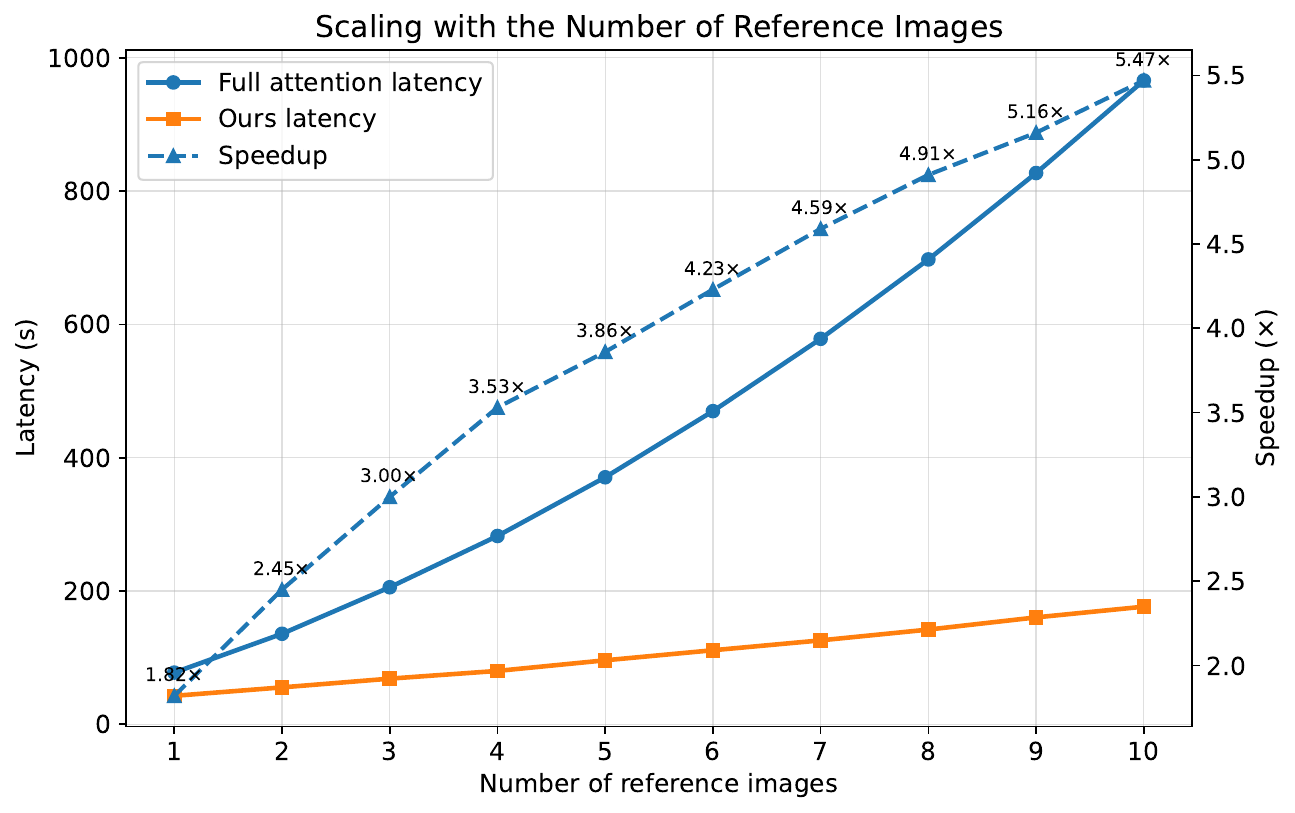}

    \captionof{figure}{
    \textbf{Scaling with the number of reference images.}
    End-to-end latency is measured over 40 denoising steps on a single GPU
    at $1024\times1024$ resolution with CFG scale $4.0$.
    }
    \label{fig:reference-scaling}
\end{minipage}

\end{figure}

\subsection{Scaling with the Number of Reference Images}
\label{sec:exp-scaling}

We further study how inference cost scales with the number of reference
images. As shown in Figure~\ref{fig:reference-scaling}, full-attention latency
increases rapidly as more references are introduced, whereas our method grows
substantially more slowly by reusing cached reference representations.
Accordingly, the end-to-end speedup increases steadily from $1.82\times$ with
one reference image to $5.47\times$ with ten references.

This trend reflects the benefit of exact reference reuse: as the reference
context grows, full attention repeatedly incurs the cost of processing the
additional reference tokens, while our method reuses the cached reference
representations throughout generation. The advantage of exact caching
therefore becomes increasingly pronounced in multi-reference settings.

\section{Conclusion}
\label{sec:conclusion}

In this work, we studied efficient in-context conditioning for diffusion
transformers with multiple visual references. Existing structured sparse
attention enables exact reference caching, but can prevent reference tokens
from accessing the text instruction. To address this limitation, we introduced
static text anchors, a parameter-free beyond-mask design that provides a
temporary instruction path during cache construction. We also developed a
two-stage recovery procedure that combines teacher-forced velocity
distillation with on-policy supervision at student-visited states.
Experiments on three image-editing benchmarks show that our method matches the
generation quality of the full-attention baseline. With five reference
images, it achieves a $3.92\times$ end-to-end speedup with negligible overhead
from static text anchors, and the speedup reaches $5.47\times$ at ten
references in the scaling study. More broadly, our results show that structured sparse
caching can scale to large generative models without compromising quality,
opening a promising direction for efficient exact reuse in future systems.


\clearpage
\bibliography{refs}

@inproceedings{zhang2025spargeattention,
  title     = {{SpargeAttention}: Accurate and Training-Free Sparse Attention Accelerating Any Model Inference},
  author    = {Zhang, Jintao and Xiang, Chendong and Huang, Haofeng and Wei, Jia and Xi, Haocheng and Zhu, Jun and Chen, Jianfei},
  booktitle = {International Conference on Machine Learning},
  year      = {2025},
  eprint    = {2502.18137},
  archiveprefix = {arXiv}
}

@misc{xi2025sparsevideogen,
  title         = {Sparse VideoGen: Accelerating Video Diffusion Transformers with Spatial-Temporal Sparsity},
  author        = {Xi, Haocheng and Yang, Shuo and Zhao, Yilong and Xu, Chenfeng and Li, Muyang and Li, Xiuyu and Lin, Yujun and Cai, Han and Zhang, Jintao and Li, Dacheng and Chen, Jianfei and Stoica, Ion and Keutzer, Kurt and Han, Song},
  year          = {2025},
  eprint        = {2502.01776},
  archiveprefix = {arXiv},
  primaryclass  = {cs.CV}
}

@inproceedings{li2025radialattention,
  title     = {Radial Attention: {$O(n\log n)$} Sparse Attention with Energy Decay for Long Video Generation},
  author    = {Li, Xingyang and Li, Muyang and Cai, Tianle and Xi, Haocheng and Yang, Shuo and Lin, Yujun and Zhang, Lvmin and Yang, Songlin and Hu, Jinbo and Peng, Kelly and Agrawala, Maneesh and Stoica, Ion and Keutzer, Kurt and Han, Song},
  booktitle = {Advances in Neural Information Processing Systems},
  year      = {2025},
  eprint    = {2506.19852},
  archiveprefix = {arXiv}
}

@misc{wu2025vmoba,
  title         = {{VMoBA}: Mixture-of-Block Attention for Video Diffusion Models},
  author        = {Wu, Jianzong and Hou, Liang and Yang, Haotian and Tao, Xin and Tian, Ye and Wan, Pengfei and Zhang, Di and Tong, Yunhai},
  year          = {2025},
  eprint        = {2506.23858},
  archiveprefix = {arXiv},
  primaryclass  = {cs.CV}
}

@misc{zhang2026spargeattention2,
  title         = {{SpargeAttention2}: Trainable Sparse Attention via Hybrid Top-k+Top-p Masking and Distillation Fine-Tuning},
  author        = {Zhang, Jintao and Jiang, Kai and Xiang, Chendong and Feng, Weiqi and Hu, Yuezhou and Xi, Haocheng and Chen, Jianfei and Zhu, Jun},
  year          = {2026},
  eprint        = {2602.13515},
  archiveprefix = {arXiv},
  primaryclass  = {cs.CV}
}

@misc{selvaraju2024fora,
  title         = {{FORA}: Fast-Forward Caching in Diffusion Transformer Acceleration},
  author        = {Selvaraju, Pratheba and Ding, Tianyu and Chen, Tianyi and Zharkov, Ilya and Liang, Luming},
  year          = {2024},
  eprint        = {2407.01425},
  archiveprefix = {arXiv},
  primaryclass  = {cs.CV}
}

@misc{liu2024smoothcache,
  title         = {{SmoothCache}: A Universal Inference Acceleration Technique for Diffusion Transformers},
  author        = {Liu, Joseph and Geddes, Joshua and Guo, Ziyu and Jiang, Haomiao and Nandwana, Mahesh Kumar},
  year          = {2024},
  eprint        = {2411.10510},
  archiveprefix = {arXiv},
  primaryclass  = {cs.LG}
}

@inproceedings{ma2024l2c,
  title     = {Learning-to-Cache: Accelerating Diffusion Transformer via Layer Caching},
  author    = {Ma, Xinyin and Fang, Gongfan and Mi, Michael Bi and Wang, Xinchao},
  booktitle = {Advances in Neural Information Processing Systems},
  year      = {2024},
  eprint    = {2406.01733},
  archiveprefix = {arXiv}
}

@misc{cui2026bwcache,
  title         = {{BWCache}: Accelerating Video Diffusion Transformers through Block-Wise Caching},
  author        = {Cui, Hanshuai and Tang, Zhiqing and Xu, Zhifei and Yao, Zhi and Zeng, Wenyi and Jia, Weijia},
  year          = {2026},
  eprint        = {2509.13789},
  archiveprefix = {arXiv},
  primaryclass  = {cs.CV}
}

@misc{zhang2025easycontrol,
  title         = {{EasyControl}: Adding Efficient and Flexible Control for Diffusion Transformer},
  author        = {Zhang, Yuxuan and Yuan, Yirui and Song, Yiren and Wang, Haofan and Liu, Jiaming},
  year          = {2025},
  eprint        = {2503.07027},
  archiveprefix = {arXiv},
  primaryclass  = {cs.CV}
}

@misc{tan2025ominicontrol2,
  title         = {{OminiControl2}: Efficient Conditioning for Diffusion Transformers},
  author        = {Tan, Zhenxiong and Xue, Qiaochu and Yang, Xingyi and Liu, Songhua and Wang, Xinchao},
  year          = {2025},
  eprint        = {2503.08280},
  archiveprefix = {arXiv},
  primaryclass  = {cs.CV}
}

@inproceedings{tan2025ominicontrol,
  title     = {{OminiControl}: Minimal and Universal Control for Diffusion Transformer},
  author    = {Tan, Zhenxiong and Liu, Songhua and Yang, Xingyi and Xue, Qiaochu and Wang, Xinchao},
  booktitle = {Proceedings of the IEEE/CVF International Conference on Computer Vision},
  year      = {2025},
  eprint    = {2411.15098},
  archiveprefix = {arXiv}
}

@misc{wu2025qwenimage,
  title         = {{Qwen-Image} Technical Report},
  author        = {Wu, Chenfei and Li, Jiahao and Zhou, Jingren and Lin, Junyang and Gao, Kaiyuan and Yan, Kun and Yin, Sheng-ming and Bai, Shuai and Xu, Xiao and Chen, Yilei and Chen, Yuxiang and Tang, Zecheng and Zhang, Zekai and Wang, Zhengyi and Yang, An and Yu, Bowen and Cheng, Chen and Liu, Dayiheng and Li, Deqing and Zhang, Hang and Meng, Hao and Wei, Hu and Ni, Jingyuan and Chen, Kai and Cao, Kuan and Peng, Liang and Qu, Lin and Wu, Minggang and Wang, Peng and Yu, Shuting and Wen, Tingkun and Feng, Wensen and Xu, Xiaoxiao and Wang, Yi and Zhang, Yichang and Zhu, Yongqiang and Wu, Yujia and Cai, Yuxuan and Liu, Zenan},
  year          = {2025},
  eprint        = {2508.02324},
  archiveprefix = {arXiv},
  primaryclass  = {cs.CV}
}

@inproceedings{peebles2023dit,
  title     = {Scalable Diffusion Models with Transformers},
  author    = {Peebles, William and Xie, Saining},
  booktitle = {Proceedings of the IEEE/CVF International Conference on Computer Vision},
  year      = {2023},
  eprint    = {2212.09748},
  archiveprefix = {arXiv}
}

@inproceedings{esser2024sd3,
  title     = {Scaling Rectified Flow Transformers for High-Resolution Image Synthesis},
  author    = {Esser, Patrick and Kulal, Sumith and Blattmann, Andreas and Entezari, Rahim and M{\"u}ller, Jonas and Saini, Harry and Levi, Yam and Lorenz, Dominik and Sauer, Axel and Boesel, Frederic and Podell, Dustin and Dockhorn, Tim and English, Zion and Lacey, Kyle and Goodwin, Alex and Marek, Yannik and Rombach, Robin},
  booktitle = {International Conference on Machine Learning},
  year      = {2024},
  eprint    = {2403.03206},
  archiveprefix = {arXiv}
}

@misc{liu2024clear,
  title         = {{CLEAR}: Conv-Like Linearization Revs Pre-Trained Diffusion Transformers Up},
  author        = {Liu, Songhua and Tan, Zhenxiong and Wang, Xinchao},
  year          = {2024},
  eprint        = {2412.16112},
  archiveprefix = {arXiv},
  primaryclass  = {cs.CV}
}

@misc{dong2025convfusion,
  title         = {Can We Achieve Efficient Diffusion without Self-Attention? Distilling Self-Attention into Convolutions},
  author        = {Dong, ZiYi and Zhou, Chengxing and Deng, Weijian and Wei, Pengxu and Ji, Xiangyang and Lin, Liang},
  year          = {2025},
  eprint        = {2504.21292},
  archiveprefix = {arXiv},
  primaryclass  = {cs.CV}
}

@misc{becker2025edit,
  title         = {{EDiT}: Efficient Diffusion Transformers with Linear Compressed Attention},
  author        = {Becker, Philipp and Mehrotra, Abhinav and Chavhan, Ruchika and Chadwick, Malcolm and Morreale, Luca and Noroozi, Mehdi and Ramos, Alberto Gil and Bhattacharya, Sourav},
  year          = {2025},
  eprint        = {2503.16726},
  archiveprefix = {arXiv},
  primaryclass  = {cs.CV}
}

@misc{wang2025lit,
  title         = {{LiT}: Delving into a Simple Linear Diffusion Transformer for Image Generation},
  author        = {Wang, Jiahao and Kang, Ning and Yao, Lewei and Chen, Mengzhao and Wu, Chengyue and Zhang, Songyang and Xue, Shuchen and Liu, Yong and Wu, Taiqiang and Liu, Xihui and Zhang, Kaipeng and Zhang, Shifeng and Shao, Wenqi and Li, Zhenguo and Luo, Ping},
  year          = {2025},
  eprint        = {2501.12976},
  archiveprefix = {arXiv},
  primaryclass  = {cs.CV}
}

@inproceedings{agarwal2024gkd,
  title     = {On-Policy Distillation of Language Models: Learning from Self-Generated Mistakes},
  author    = {Agarwal, Rishabh and Vieillard, Nino and Zhou, Yongchao and Stanczyk, Piotr and Ramos, Sabela and Geist, Matthieu and Bachem, Olivier},
  booktitle = {International Conference on Learning Representations},
  year      = {2024},
  eprint    = {2306.13649},
  archiveprefix = {arXiv}
}

@inproceedings{gu2024minillm,
  title     = {{MiniLLM}: On-Policy Distillation of Large Language Models},
  author    = {Gu, Yuxian and Dong, Li and Wei, Furu and Huang, Minlie},
  booktitle = {International Conference on Learning Representations},
  year      = {2024},
  eprint    = {2306.08543},
  archiveprefix = {arXiv}
}

@inproceedings{song2023consistency,
  title     = {Consistency Models},
  author    = {Song, Yang and Dhariwal, Prafulla and Chen, Mark and Sutskever, Ilya},
  booktitle = {International Conference on Machine Learning},
  year      = {2023},
  eprint    = {2303.01469},
  archiveprefix = {arXiv}
}

@inproceedings{huang2025selfforcing,
  title     = {Self Forcing: Bridging the Train-Test Gap in Autoregressive Video Diffusion},
  author    = {Huang, Xun and Li, Zhengqi and He, Guande and Zhou, Mingyuan and Shechtman, Eli},
  booktitle = {Advances in Neural Information Processing Systems},
  year      = {2025},
  eprint    = {2506.08009},
  archiveprefix = {arXiv}
}

@inproceedings{dao2022flashattention,
  title     = {{FlashAttention}: Fast and Memory-Efficient Exact Attention with {IO}-Awareness},
  author    = {Dao, Tri and Fu, Daniel Y. and Ermon, Stefano and Rudra, Atri and R{\'e}, Christopher},
  booktitle = {Advances in Neural Information Processing Systems},
  year      = {2022}
}

@misc{wu2026omnigen2,
  title         = {{OmniGen2}: Towards Instruction-Aligned Multimodal Generation},
  author        = {Wu, Chenyuan and Zheng, Pengfei and Yan, Ruiran and Xiao, Shitao and Luo, Xin and Wang, Yueze and Li, Wanli and Jiang, Xiyan and Liu, Yexin and Zhou, Junjie and Liu, Ze and Xia, Ziyi and Li, Chaofan and Deng, Haoge and Wang, Jiahao and Luo, Kun and Zhang, Bo and Lian, Defu and Wang, Xinlong and Wang, Zhongyuan and Huang, Tiejun and Liu, Zheng},
  year          = {2026},
  eprint        = {2506.18871},
  archiveprefix = {arXiv},
  primaryclass  = {cs.CV}
}

@misc{liu2025step1xedit,
  title         = {{Step1X-Edit}: A Practical Framework for General Image Editing},
  author        = {Liu, Shiyu and Han, Yucheng and Xing, Peng and Yin, Fukun and Wang, Rui and Cheng, Wei and Liao, Jiaqi and Wang, Yingming and Fu, Honghao and Han, Chunrui and Li, Guopeng and Peng, Yuang and Sun, Quan and Wu, Jingwei and Cai, Yan and Ge, Zheng and Ming, Ranchen and Xia, Lei and Zeng, Xianfang and Zhu, Yibo and Jiao, Binxing and Zhang, Xiangyu and Yu, Gang and Jiang, Daxin},
  year          = {2025},
  eprint        = {2504.17761},
  archiveprefix = {arXiv},
  primaryclass  = {cs.CV}
}

@misc{ye2025imgedit,
  title         = {{ImgEdit}: A Unified Image Editing Dataset and Benchmark},
  author        = {Ye, Yang and He, Xianyi and Li, Zongjian and Lin, Bin and Yuan, Shenghai and Yan, Zhiyuan and Hou, Bohan and Yuan, Li},
  year          = {2025},
  eprint        = {2505.20275},
  archiveprefix = {arXiv},
  primaryclass  = {cs.CV}
}

@inproceedings{wei2025omniedit,
  title     = {{OmniEdit}: Building Image Editing Generalist Models Through Specialist Supervision},
  author    = {Wei, Cong and Xiong, Zheyang and Ren, Weiming and Du, Xinrun and Zhang, Ge and Chen, Wenhu},
  booktitle = {International Conference on Learning Representations},
  year      = {2025},
  eprint    = {2411.07199},
  archiveprefix = {arXiv}
}

@inproceedings{jiang2026orionedit,
  title     = {{OrionEdit}: Bridging Reference and Source Images for Generalized Cross-Image Editing},
  author    = {Jiang, Zeyu and Po, Lai Man and Xu, Xuyuan and Wang, Yexin and Gong, Guoping and Wu, Haoxuan and Yan, Chenbo and Li, Kun and Liu, Yuyang},
  booktitle = {Proceedings of the IEEE/CVF Conference on Computer Vision and Pattern Recognition},
  pages     = {9127--9138},
  year      = {2026}
}

@article{wu2025omnigen2,
  title={OmniGen2: Exploration to Advanced Multimodal Generation},
  author={Chenyuan Wu and Pengfei Zheng and Ruiran Yan and Shitao Xiao and Xin Luo and Yueze Wang and Wanli Li and Xiyan Jiang and Yexin Liu and Junjie Zhou and Ze Liu and Ziyi Xia and Chaofan Li and Haoge Deng and Jiahao Wang and Kun Luo and Bo Zhang and Defu Lian and Xinlong Wang and Zhongyuan Wang and Tiejun Huang and Zheng Liu},
  journal={arXiv preprint arXiv:2506.18871},
  year={2025}
}

@inproceedings{ko2025distillm2,
  title     = {{DistiLLM-2}: A Contrastive Approach Boosts the Distillation of {LLMs}},
  author    = {Ko, Jongwoo and Chen, Tianyi and Kim, Sungnyun and Ding, Tianyu and Liang, Luming and Zharkov, Ilya and Yun, Se-Young},
  booktitle = {International Conference on Machine Learning},
  year      = {2025},
  eprint    = {2503.07067},
  archiveprefix = {arXiv}
}

@misc{qwen2025qwen3,
  title         = {{Qwen3} Technical Report},
  author        = {Yang, An and others},
  year          = {2025},
  eprint        = {2505.09388},
  archiveprefix = {arXiv},
  primaryclass  = {cs.CL}
}

@article{lu2025onpolicydistillation,
  author  = {Lu, Kevin and Thinking Machines Lab},
  title   = {On-Policy Distillation},
  journal = {Thinking Machines Lab: Connectionism},
  year    = {2025},
  doi     = {10.64434/tml.20251026},
  note    = {https://thinkingmachines.ai/blog/on-policy-distillation/}
}

@inproceedings{yin2025causvid,
  title     = {From Slow Bidirectional to Fast Autoregressive Video Diffusion Models},
  author    = {Yin, Tianwei and Zhang, Qiang and Zhang, Richard and Freeman, William T. and Durand, Fredo and Shechtman, Eli and Huang, Xun},
  booktitle = {Proceedings of the IEEE/CVF Conference on Computer Vision and Pattern Recognition},
  year      = {2025},
  eprint    = {2412.07772},
  archiveprefix = {arXiv}
}

@misc{cui2025selfforcingpp,
  title         = {{Self-Forcing++}: Towards Minute-Scale High-Quality Video Generation},
  author        = {Cui, Justin and Wu, Jie and Li, Ming and Yang, Tao and Li, Xiaojie and Wang, Rui and Bai, Andrew and Ban, Yuanhao and Hsieh, Cho-Jui},
  year          = {2025},
  eprint        = {2510.02283},
  archiveprefix = {arXiv},
  primaryclass  = {cs.CV}
}

@misc{li2026rethinkingcfgopd,
  title         = {Rethinking Classifier-Free Guidance in On-Policy Diffusion Distillation},
  author        = {Li, Bingnan and Wang, Haozhe and Xiong, Haozhong and Wu, Fangtai and Yu, Jinpeng and Shi, Yang and Liu, Jiaming and Huang, Ruihua},
  year          = {2026},
  eprint        = {2607.24731},
  archiveprefix = {arXiv},
  primaryclass  = {cs.CV}
}

@inproceedings{lipman2023flowmatching,
  title     = {Flow Matching for Generative Modeling},
  author    = {Lipman, Yaron and Chen, Ricky T. Q. and Ben-Hamu, Heli and Nickel, Maximilian and Le, Matt},
  booktitle = {International Conference on Learning Representations},
  year      = {2023}
}

@misc{liu2025cobra,
  title         = {{Cobra}: Efficient Line Art {CO}lorization with {BRoA}der References},
  author        = {Zhuang, Junhao and Li, Lingen and Ju, Xuan and Zhang, Zhaoyang and Yuan, Chun and Shan, Ying},
  year          = {2025},
  eprint        = {2504.12240},
  archiveprefix = {arXiv},
  primaryclass  = {cs.CV}
}

@misc{firered2026imageedit,
  title         = {{FireRed-Image-Edit-1.0} Technical Report},
  author        = {{Super Intelligence Team} and Qiao, Changhao and Hui, Chao and Li, Chen and Wang, Cunzheng and Song, Dejia and Zhang, Jiale and Li, Jing and Xiang, Qiang and Wang, Runqi and Sun, Shuang and Zhu, Wei and Tang, Xu and Hu, Yao and Chen, Yibo and Huang, Yuhao and Duan, Yuxuan and Chen, Zhiyi and Guo, Ziyuan},
  year          = {2026},
  eprint        = {2602.13344},
  archiveprefix = {arXiv},
  primaryclass  = {cs.CV}
}

@misc{longcat2025image,
  title         = {{LongCat-Image} Technical Report},
  author        = {{Meituan LongCat Team} and Ma, Hanghang and Tan, Haoxian and Huang, Jiale and Wu, Junqiang and He, Jun-Yan and Gao, Lishuai and Xiao, Songlin and Wei, Xiaoming and Ma, Xiaoqi and Cai, Xunliang and Guan, Yayong and Hu, Jie},
  year          = {2025},
  eprint        = {2512.07584},
  archiveprefix = {arXiv},
  primaryclass  = {cs.CV}
}

@misc{boogu2026image,
  title         = {{Boogu-Image-0.1}: Boosting Open-Source Unified Multimodal Understanding and Generation},
  author        = {Chen, Guoxuan and Xiao, Chufeng and Yang, Haoran and Xie, Siyue and Huang, Binxiao and Zhang, Ming and Chau, Cheuk Him and Fu, Xinyu and Lian, Yingzhao and Li, Tom S. Y. and others},
  year          = {2026},
  eprint        = {2607.13125},
  archiveprefix = {arXiv},
  primaryclass  = {cs.CV}
}

@misc{joyai2026image,
  title         = {Awaking Spatial Intelligence in Unified Multimodal Understanding and Generation},
  author        = {Song, Lin and Li, Wenbo and Ma, Guoqing and Tang, Wei and Wang, Bo and Zhang, Yuan and Yang, Yijun and Xiao, Yicheng and Liu, Jianhui and Zhang, Yanbing and others},
  year          = {2026},
  eprint        = {2605.04128},
  archiveprefix = {arXiv},
  primaryclass  = {cs.CV}
}

@misc{flux2dev2025,
  title        = {{FLUX.2}: Frontier Visual Intelligence},
  author       = {{Black Forest Labs}},
  year         = {2025},
  howpublished = {Black Forest Labs technical blog},
  note         = {https://bfl.ai/blog/flux-2}
}

@inproceedings{li2025svdquant,
  title     = {{SVDQuant}: Absorbing Outliers by Low-Rank Components for 4-Bit Diffusion Models},
  author    = {Li, Muyang and Lin, Yujun and Zhang, Zhekai and Cai, Tianle and Li, Xiuyu and Guo, Junxian and Xie, Enze and Meng, Chenlin and Zhu, Jun-Yan and Han, Song},
  booktitle = {International Conference on Learning Representations},
  year      = {2025},
  eprint    = {2411.05007},
  archiveprefix = {arXiv}
}

@inproceedings{liu2025teacache,
  title     = {Timestep Embedding Tells: It's Time to Cache for Video Diffusion Model},
  author    = {Liu, Feng and Zhang, Shiwei and Wang, Xiaofeng and Wei, Yujie and Qiu, Haonan and Zhao, Yuzhong and Zhang, Yingya and Ye, Qixiang and Wan, Fang},
  booktitle = {Proceedings of the IEEE/CVF Conference on Computer Vision and Pattern Recognition},
  pages     = {7353--7363},
  year      = {2025},
  eprint    = {2411.19108},
  archiveprefix = {arXiv}
}

@inproceedings{zhang2025sageattention2,
  title     = {{SageAttention2}: Efficient Attention with Thorough Outlier Smoothing and Per-thread {INT4} Quantization},
  author    = {Zhang, Jintao and Huang, Haofeng and Zhang, Pengle and Wei, Jia and Zhu, Jun and Chen, Jianfei},
  booktitle = {International Conference on Machine Learning},
  pages     = {75097--75119},
  year      = {2025},
  eprint    = {2411.10958},
  archiveprefix = {arXiv}
}

@misc{lin2026topi,
  title         = {Token Pruning for In-Context Generation in Diffusion Transformers},
  author        = {Lin, Junqing and Zheng, Xingyu and Cheng, Pei and Fu, Bin and Sun, Jingwei and Sun, Guangzhong},
  year          = {2026},
  eprint        = {2602.01609},
  archiveprefix = {arXiv},
  primaryclass  = {cs.CV}
}
\bibliographystyle{iclr2026_conference}

\clearpage
\appendix
\section{A Graph Statement of the Impossibility}
\label{app:graph}

We formalize the constraint in \S\ref{sec:constraint} using a layered
dependency graph. Let
\[
V=\{\mathrm{LT},X,\mathrm{R}\},
\]
and let $M\in\{0,1\}^{V\times V}$ be a layer-shared attention mask, where
$M[b',b]=1$ means that $b$ attends to $b'$. The block representations satisfy
\begin{equation}
h_b^{\ell}
=
F_{\ell}\!\left(
h_b^{\ell-1};
\{h_{b'}^{\ell-1}:M[b',b]=1\}
\right),
\qquad \ell=1,\ldots,L .
\label{eq:blockrec}
\end{equation}

We unroll $M$ across depth. The resulting graph contains a node $(b,\ell)$
for each block $b$ and layer $\ell$, residual edges
$(b,\ell-1)\to(b,\ell)$, and attention edges
$(b',\ell-1)\to(b,\ell)$ whenever $M[b',b]=1$.
Write $(b,\ell)\leadsto(b',m)$ when a directed path exists.

Since $X$ changes across denoising steps, structural exact reuse of the
reference requires
\begin{equation}
\operatorname{StructCacheable}(\mathrm{R})
\Longleftrightarrow
(X,0)\not\leadsto(\mathrm{R},\ell),
\qquad \forall\,\ell=1,\ldots,L .
\label{eq:def-cache}
\end{equation}
This is a structural guarantee: if such a path exists, the mask alone no
longer guarantees step-independent reference states.

We say that the reference can access contextualized live text when
\begin{equation}
\operatorname{LiveTextReachable}(\mathrm{R})
\Longleftrightarrow
(\mathrm{LT},\ell)\leadsto(\mathrm{R},m)
\quad
\text{for some }1\le\ell<m\le L .
\label{eq:def-instr}
\end{equation}
The condition $\ell\ge1$ excludes the uncontextualized input text embedding.

\begin{theorem}
\label{thm:imp}
Assume $L\ge2$, the mask is shared across layers, and the pretrained
text--target interaction is preserved, i.e.,
$M[X,\mathrm{LT}]=1$.
Then no mask on the original token set
$\{\mathrm{LT},X,\mathrm{R}\}$ satisfies both
$\operatorname{StructCacheable}(\mathrm{R})$ and
$\operatorname{LiveTextReachable}(\mathrm{R})$.
\end{theorem}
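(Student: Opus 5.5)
Assume, toward a contradiction, that a layer-shared mask $M$ on $\{\mathrm{LT},X,\mathrm{R}\}$ with $M[X,\mathrm{LT}]=1$ satisfies both properties. The idea is that any path that lets $\mathrm{R}$ see contextualized live text can be extended backwards through the preserved text--target edge. This produces a path from $(X,0)$ into the reference branch, which contradicts cacheability.

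\textbf{Step 1: extract a live-text-to-reference path.} By $\operatorname{LiveTextReachable}(\mathrm{R})$ there are indices $1\le \ell<m\le L$ with $(\mathrm{LT},\ell)\leadsto(\mathrm{R},m)$ in the unrolled graph.

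\textbf{Step 2: build a target-to-live-text path.} Because $M[X,\mathrm{LT}]=1$ and the mask is shared across layers, the unrolled graph contains the attention edge $(X,k-1)\to(\mathrm{LT},k)$ for every $k=1,\ldots,L$. Taking $k=1$ gives $(X,0)\to(\mathrm{LT},1)$. Following residual edges $(\mathrm{LT},1)\to(\mathrm{LT},2)\to\cdots\to(\mathrm{LT},\ell)$ then gives $(X,0)\leadsto(\mathrm{LT},\ell)$. Here $\ell\ge1$ is what makes the first edge available. The restriction $\ell\ge1$ in \eqref{eq:def-instr} matters for exactly this reason: uncontextualized input text $(\mathrm{LT},0)$ has no incoming target edge.

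\textbf{Step 3: concatenate the paths.} Joining Step 2 and Step 1 gives $(X,0)\leadsto(\mathrm{LT},\ell)\leadsto(\mathrm{R},m)$ with $1\le m\le L$. This violates \eqref{eq:def-cache}, so $\operatorname{StructCacheable}(\mathrm{R})$ fails, which is the contradiction.

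\textbf{Remaining checks.} The hypothesis $L\ge2$ is only needed so that the index pair $\ell<m$ in \eqref{eq:def-instr} can exist at all; for $L=1$ the conclusion holds vacuously. Directed paths compose, so no further work is needed.

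I expect the main obstacle to be bookkeeping rather than substance: matching the mask-index convention ($M[b',b]=1$ means $b$ attends to $b'$) to the orientation of the edge from $X$ into $\mathrm{LT}$, and confirming that layer-sharing really places that edge at layer $1$ and at every later layer. Once the edge is oriented correctly, the argument is simply the composition of the two paths.
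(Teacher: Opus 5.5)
Your proof is correct and follows the paper's argument exactly: the edge $(X,0)\to(\mathrm{LT},1)$ from $M[X,\mathrm{LT}]=1$, residual edges up to $(\mathrm{LT},\ell)$, and concatenation with the live-text path to $(\mathrm{R},m)$ contradict $\operatorname{StructCacheable}(\mathrm{R})$. Your check of the mask-index orientation and your remarks on the roles of $\ell\ge1$ and $L\ge2$ are also accurate.
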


\begin{proof}
Since $M[X,\mathrm{LT}]=1$,
\[
(X,0)\to(\mathrm{LT},1).
\]
If $\operatorname{LiveTextReachable}(\mathrm{R})$ holds, then for some
$1\le\ell<m\le L$,
\[
(\mathrm{LT},\ell)\leadsto(\mathrm{R},m).
\]
Residual connections give
$(\mathrm{LT},1)\leadsto(\mathrm{LT},\ell)$, hence
\[
(X,0)\to(\mathrm{LT},1)
\leadsto(\mathrm{LT},\ell)
\leadsto(\mathrm{R},m),
\]
which contradicts
$\operatorname{StructCacheable}(\mathrm{R})$.
\end{proof}

Our construction changes the theorem's domain by adding a static
instruction carrier $\mathrm{ST}$. The reference and $\mathrm{ST}$ are
mutually visible, while neither receives information from the live target
stream. Thus the instruction can reach the reference through
$\mathrm{ST}$ without creating a path from $X$ to $\mathrm{R}$, preserving
structural exact cacheability.

The theorem concerns a structural guarantee under a layer-shared mask and
preserved text--target coupling; it does not claim that every permitted
dependency is numerically active, nor that text reachability alone guarantees
better editing quality.

\section{Why the Recovery Is Staged}
\label{app:staged}

This appendix explains the optimization role of the two query distributions used in
\S\ref{sec:recovery}. The formal point is not that Stage~1 is mathematically necessary before
Stage~2, but that once the student cannot be assumed to reproduce the teacher exactly, the
location at which the teacher is queried can change the best approximation. This is the reason
for switching distributions after teacher-forced recovery saturates in our experiments.

\paragraph{A fixed-query objective.}
Let $\mu$ be a fixed distribution over query triples $(z,c,t)$ and write
\begin{equation}
  \mathcal{L}(\theta;\mu)
  =
  \int
  \big\|v_\theta(z,c,t)-v_{\mathrm{full}}(z,c,t)\big\|_2^2
  \,\mathrm{d}\mu(z,c,t).
  \label{eq:recobj}
\end{equation}
Stage~1 uses the distribution $\mu_{\mathrm{data}}$ of data-derived interpolants, whereas
Stage~2 uses states drawn from the current student's rollout distribution. For a fixed $\mu$,
define
$e_\theta=v_\theta-v_{\mathrm{full}}$ and let
$J_\theta=\partial_\theta v_\theta$. Then
\begin{equation}
  \nabla_\theta\mathcal{L}(\theta;\mu)
  =
  2\int J_\theta(z,c,t)^\top e_\theta(z,c,t)
  \,\mathrm{d}\mu(z,c,t).
  \label{eq:foc}
\end{equation}
Thus the query distribution appears explicitly in the optimization problem. If there exists a
student parameter $\theta^\star$ that matches the teacher pointwise on all relevant states, then
$e_{\theta^\star}=0$ and the same zero-loss solution is optimal for every $\mu$. After an
architectural change, however, such exact realizability is not guaranteed. In that case the
residual generally cannot be removed everywhere, and changing $\mu$ changes how the remaining
error is weighted. This observation does not require us to assume that exact compensation by the
student is impossible; it only explains why off-policy and on-policy regression need not have
the same optimum when the residual is nonzero.

\paragraph{Why the student distribution is the relevant second-stage target.}
At inference, the current student visits states distributed according to an occupancy measure
$\rho_\theta$ induced by its own sampler. Errors on these states are directly encountered during
generation. At Stage~2 iteration $i$, we first generate a rollout with the current parameters
$\theta_i$, detach the visited states, and then take stochastic-gradient updates using samples
from the frozen query distribution $\rho_{\theta_i}$. Equivalently, each update targets
\begin{equation}
  \mathcal{L}(\theta;\rho_{\theta_i})
  =
  \int
  \big\|v_\theta(z,c,t)-v_{\mathrm{full}}(z,c,t)\big\|_2^2
  \,\mathrm{d}\rho_{\theta_i}(z,c,t),
  \label{eq:onpolicy-fixed}
\end{equation}
without differentiating through the rollout that produced $\rho_{\theta_i}$. The teacher is
queried at exactly the same states as the student, so no approximation of the teacher target is
introduced by moving the query distribution. What changes is only where the velocity mismatch is
penalized.

Because $\rho_\theta$ itself changes with the student, Stage~2 is not ordinary minimization of a
single fixed objective. It is better viewed as a moving-query stochastic procedure: generate
states from $\rho_{\theta_i}$, detach them, update $\theta$, and repeat. We make no convergence
claim for this process.

\paragraph{Why we use the stages in this order.}
Stage~1 provides a simple teacher-forced initialization using data-derived states and requires no
student rollout for each update. In our experiment it recovers most of the quality lost after the
architectural conversion. More importantly for the argument here, continued Stage~1 training
then saturates under the reported schedule, whereas switching the query distribution to
student-visited states improves every evaluated Stage~2 checkpoint
(Figure~\ref{fig:recovery_curves}). Equation~\eqref{eq:foc} gives a direct explanation for why
such a switch can matter: once a nonzero residual remains, two query distributions need not place
that residual in the same regions of state space.

Accordingly, the claim supported by this analysis and our experiments is the following: after
teacher-forced recovery has plateaued, changing the supervision locations to the student's own
trajectory provides an effective additional recovery signal. We do not need the stronger claim
that teacher predictions become unreliable away from data interpolants, nor do we claim that a
teacher-forced phase is a mathematical prerequisite for every possible on-policy recovery
procedure.

\section{Cost Accounting}
\label{app:cost}

This appendix gives an attention-interaction model of the reference-caching speedup. The model is
exact under its stated token-interaction counting convention; it is not a complete transformer
FLOP model or a wall-clock predictor, because it omits hardware effects and position-wise linear
work such as projections and feed-forward layers.

\paragraph{Derivation.}
Let $N_{\mathrm{R}}$, $N_X$, $N_{\mathrm{LT}}$, and $N_{\mathrm{ST}}$ denote the token counts
of the reference, target, live-text, and static-text blocks, and define
$S=N_X+N_{\mathrm{LT}}$. With the sequence and connectivity of
Figure~\ref{fig:overview}, computation splits into a one-time static precomputation and a
per-step live computation. The static block $[\mathrm{ST},\mathrm{R}]$ is evaluated once; at
each denoising step, $[\mathrm{LT},X_t]$ supplies the queries and attends to
$[\mathrm{LT},X_t,\mathrm{R}]$. Counting one unit per query--key interaction gives
\begin{align}
  \mathcal{C}_{\mathrm{pre}}
  &=
  (N_{\mathrm{R}}+N_{\mathrm{ST}})^2,
  \label{eq:cost-pre}\\
  \mathcal{C}_{\mathrm{step}}
  &=
  S(N_{\mathrm{R}}+S),
  \label{eq:cost-step}\\
  \mathcal{C}_{\mathrm{ours}}
  &=
  \mathcal{C}_{\mathrm{pre}}+T\mathcal{C}_{\mathrm{step}},
  \label{eq:cost-ours}
\end{align}
whereas full attention over the original sequence costs
$\mathcal{C}_{\mathrm{full}}=T(N_{\mathrm{R}}+S)^2$ under the same convention. Therefore
\begin{equation}
  \mathcal{R}(N_{\mathrm{R}})
  =
  \frac{\mathcal{C}_{\mathrm{full}}}{\mathcal{C}_{\mathrm{ours}}}
  =
  \frac{T(N_{\mathrm{R}}+S)^2}
  {(N_{\mathrm{R}}+N_{\mathrm{ST}})^2+TS(N_{\mathrm{R}}+S)}.
  \label{eq:speedup}
\end{equation}
Section~\ref{sec:exp-scaling} reports measured end-to-end latency; the purpose of
\eqref{eq:speedup} is to characterize the scaling induced by the attention structure.

\paragraph{Monotonicity.}
Put
$u=N_{\mathrm{R}}+S$ and $a=N_{\mathrm{ST}}-S$, so that
$N_{\mathrm{R}}+N_{\mathrm{ST}}=u+a$ and
\begin{equation}
  \mathcal{R}(u)
  =
  \frac{Tu^2}{D(u)},
  \qquad
  D(u)=(u+a)^2+TSu,
  \qquad u>S.
  \label{eq:speedup-u}
\end{equation}
Differentiating gives
\begin{equation}
  \mathcal{R}'(u)
  =
  \frac{Tu\,[2a(u+a)+TSu]}{D(u)^2}.
  \label{eq:speedup-deriv}
\end{equation}
Assume the static-text block is no longer than the live block, $0\le N_{\mathrm{ST}}\le S$,
which is the regime used here. Then $-S\le a\le0$. Because $a\le0$ and $u+a\le u$,
\begin{equation}
  2a(u+a)+TSu
  \;\ge\;
  2au+TSu
  =
  u(TS+2a)
  \;\ge\;
  uS(T-2).
  \label{eq:speedup-lower}
\end{equation}
Hence $\mathcal{R}'(u)>0$ for every $T>2$: under this interaction-counting model, the
analytical speedup is strictly increasing with the number of reference tokens.

\paragraph{Ceiling.}
The same assumptions also give a simple $T$-fold ceiling. From
\eqref{eq:speedup-u},
\begin{align}
  D(u)-u^2
  &=
  2au+a^2+TSu \\
  &=
  u\big[(T-2)S+2N_{\mathrm{ST}}\big]
  +(N_{\mathrm{ST}}-S)^2.
  \label{eq:speedup-ceiling}
\end{align}
For $T>2$ and $S>0$, the right-hand side is strictly positive. Therefore
$D(u)>u^2$ and
\begin{equation}
  \mathcal{R}(N_{\mathrm{R}})<T
  \qquad\text{for every finite }N_{\mathrm{R}}.
\end{equation}
Moreover, the leading quadratic terms of numerator and denominator in
\eqref{eq:speedup} give
$\mathcal{R}(N_{\mathrm{R}})\to T$ only as
$N_{\mathrm{R}}\to\infty$. Thus no finite reference count reaches a $T$-fold speedup in this
model.

\paragraph{What the interaction model omits.}
Equation~\eqref{eq:speedup} counts attention interactions only. In an actual transformer,
caching also avoids later reference-side projections and feed-forward computation, whose cost
is approximately linear in the number of cached reference tokens. Kernel launch overhead,
memory traffic, and hardware utilization are omitted as well. Consequently,
\eqref{eq:speedup} should be used to reason about structural scaling rather than as an exact
prediction of measured seconds.

\paragraph{Where the limiting term changes.}
The two denominator terms in \eqref{eq:speedup} are the one-time static precomputation
$(N_{\mathrm{R}}+N_{\mathrm{ST}})^2$ and the repeated live computation
$TS(N_{\mathrm{R}}+S)$. Their exact equality is
\begin{equation}
  (N_{\mathrm{R}}+N_{\mathrm{ST}})^2
  =
  TS(N_{\mathrm{R}}+S),
  \label{eq:cost-crossover}
\end{equation}
so the precise crossover depends on $N_{\mathrm{ST}}$. Comparing only the leading
$N_{\mathrm{R}}$-dependent terms gives the useful scale
$N_{\mathrm{R}}\approx TS$. Thus the common rule $N_{\mathrm{R}}<TS$ versus
$N_{\mathrm{R}}>TS$ should be read as an asymptotic crossover estimate, not as the exact
boundary.

\paragraph{Why the large-context approximation fails at one reference.}
If $N_{\mathrm{R}}\gg S$ and $N_{\mathrm{R}}\gg N_{\mathrm{ST}}$, then
\eqref{eq:speedup} reduces to
\begin{equation}
  \mathcal{R}(N_{\mathrm{R}})
  \approx
  \frac{T N_{\mathrm{R}}}{N_{\mathrm{R}}+TS}.
  \label{eq:speedup-asymptotic}
\end{equation}
At one reference in our configuration, however,
$N_{\mathrm{R}}=4096$ and $S=4608$, so $N_{\mathrm{R}}\gg S$ is already false. The
asymptotic expression therefore predicts only $0.87\times$, even though the measured end-to-end
speedup is $1.82\times$ (Figure~\ref{fig:reference-scaling}). The exact interaction model in
\eqref{eq:speedup} retains $N_{\mathrm{ST}}$ and should be evaluated with the actual static-text
token count rather than with the large-context approximation.

\section{Full Experimental Details}
\label{app:details}
Table~\ref{tab:hparams} collects the settings of both recovery stages. Two are worth singling
out. The gradient-norm clip is unusually tight at $0.05$; the student begins close to the teacher
and the updates that matter are small, so a conventional clip admits occasional steps large enough
to undo the alignment being built. And stage 2 resumes from a stage-1 checkpoint rather than from
the pretrained weights, which is what makes the two stages a sequence rather than two independent
treatments.

\begin{table}[htbp]
  \caption{Recovery hyper-parameters. Both stages train the full parameter set in bf16 with
  DeepSpeed ZeRO-2 and gradient checkpointing on a single node.}
  \label{tab:hparams}
  \centering
  \small
  \begin{tabular}{@{}lll@{}}
    \toprule
    & Stage 1 (teacher-forced) & Stage 2 (on-policy) \\
    \midrule
    Data & \multicolumn{2}{c}{OmniEdit, OrionEdit (same proportions)} \\
    Trainable & \multicolumn{2}{c}{all parameters} \\
    Precision & \multicolumn{2}{c}{bf16, ZeRO-2, gradient checkpointing} \\
    Global batch & \multicolumn{2}{c}{$8$} \\
    Optimizer & \multicolumn{2}{c}{AdamW, $\beta = (0.9, 0.999)$, $\varepsilon = 10^{-10}$} \\
    Weight decay & $3\times10^{-2}$ & $10^{-2}$ \\
    Schedule & \multicolumn{2}{c}{cosine, $200$ warmup steps} \\
    Gradient clip & \multicolumn{2}{c}{$0.05$} \\
    Learning rate & $5\mathrm{e}{-}6$ & $5\mathrm{e}{-}6$ \\
    Steps & $30$k & $500$ (checkpoints every $250$) \\
    Initialization & pretrained & stage-1 checkpoint \\
    Rollout & --- & $40$ steps, conditional branch only, no CFG \\
    Query states & --- & $K = 4$, full range \\
    Seed & \multicolumn{2}{c}{$42$} \\
    \bottomrule
  \end{tabular}
\end{table}

\paragraph{Sampling.}
All OmniContext images are generated with the same settings for every model in this paper:
$40$ denoising steps, $1024\times1024$ reference and output, true classifier-free guidance at
$\gamma = 4$ with a blank negative prompt, and a single fixed seed ($42$) across all $400$
examples. GEdit-Bench follows its own released protocol ($50$ steps, $\gamma = 4$, per-example
seeds) and its numbers are therefore not comparable to the OmniContext tables.

\paragraph{Scoring.}
We use the released evaluation code of each benchmark without modification, including its prompts,
its judge and its parsing. For OmniContext this is the VIEScore-style protocol of
\citet{wu2026omnigen2} with GPT-4.1 as the judge. \textbf{Two properties of that code bear on how
its scores should be read.} The judge is queried at the API default temperature rather than
greedily, so re-scoring an unchanged set of images does not return the same numbers --- which is
the mechanism behind the $0.054$ spread measured in \S\ref{sec:exp-setup}. And when a response
cannot be parsed as JSON the released code substitutes a uniformly random integer in $[0, 10]$;
such an event is rare, but it is heavy-tailed, and a single occurrence shifts a $50$-example
sub-task mean by up to $0.2$. We leave both behaviours in place, since changing them would break
comparability with published numbers, but they are the reason this paper declines to read
differences of a few hundredths as an ordering.

\section{What Does Not Work}
\label{app:negative}
Before settling on pointwise regression we tried a range of more elaborate recovery objectives,
none of which improved on it. We report the guidance-aware family here because it is the one a
reader is most likely to propose: the deployed sampler uses classifier-free guidance, so aligning
the recovery stage with it looks like free performance.

\paragraph{Guided rollouts and branch-aware losses.}
\label{app:opd-cfg}
At inference the model is sampled with true classifier-free guidance,
$v = v^{-} + \gamma\,(v^{+} - v^{-})$ with $\gamma = 4$, whereas the recovery stage of
\S\ref{sec:recovery} uses no guidance at all: the student rolls out under its conditional branch
alone, the teacher is evaluated at those states under its conditional branch alone, and only that
branch is regressed. Three guidance settings are therefore distinct and are stated separately
throughout --- the teacher's forward pass, the student's rollout, and evaluation --- and in the
reported configuration the first two are unguided while the third uses $\gamma = 4$. This raises the question of whether
aligning training with deployment --- rolling out along the guided trajectory and supervising the
branch structure that guidance consumes --- would recover more. Two knobs are available: whether
the rollout follows the guided combination, and whether the loss constrains anything beyond
$v^{+}$. Both settings of the second knob were run under the guided rollout, following the two
branch-aware objectives of \citet{li2026rethinkingcfgopd}. \emph{Independent branch matching}
(IBM) regresses each branch to the teacher separately,
$\|v^{+}_{\theta} - v^{+}_{\mathrm{full}}\|^2 + \|v^{-}_{\theta} - v^{-}_{\mathrm{full}}\|^2$.
\emph{Positive--direction matching} (PDM) regresses the positive branch together with the guidance
direction $d = v^{+} - v^{-}$, at $\lambda = 1$; both remove the freedom of a guided-only
objective, under which positive- and negative-branch errors may cancel inside the composed
prediction and leave the branches under-identified.

\begin{center}
\begin{tabular}{@{}lllcccc@{}}
\toprule
Variant & Rollout & Supervision & PF & SC & Overall & $\Delta$ \\
\midrule
As reported & unguided & $v^{+}$ only & $8.418$ & $8.167$ & $\mathbf{8.185}$ & --- \\
IBM & guided, $\gamma{=}4$ & $v^{+}$ and $v^{-}$ & $8.283$ & $7.987$ & $8.025$ & $-0.160$ \\
PDM & guided, $\gamma{=}4$ & $v^{+}$ and $d$ & $8.223$ & $8.040$ & $8.021$ & $-0.164$ \\
\bottomrule
\end{tabular}
\end{center}

Both cost about $0.16$ in Overall, three times the reproducibility scale of
\S\ref{sec:exp-setup}, and they are \emph{indistinguishable from each other}: the $0.004$ between
them is far beneath that scale. \textbf{Neither deployment-aligned package beats plain on-policy
regression.} We do not read this as isolating the trajectory from the branch term: a
guided-trajectory, positive-only control was not run, so the two knobs are not separated here and
the negative result applies to the packages as tested rather than to either component alone.

This is the outcome that \citet{li2026rethinkingcfgopd} would predict here, and it is worth
stating in their terms, since it delimits rather than contradicts their result. The failure mode
they identify --- antagonistic branch-error dynamics, in which matching the composed prediction
lowers the positive-branch error while raising the negative one --- arises under
\emph{privileged} negative conditioning, where the teacher's negative branch sees information the
student's does not. In our setting the negative branch is shared: teacher and student differ in
their mask, not in what they condition on, and the negative branch of each retains the same
reference images with only the text emptied. Both branch errors therefore fall together, which is
the benign regime their analysis describes, and a branch-aware objective has no cross-branch
compensation left to remove. What remains of the guided setting is its cost: the rollout visits
states under an amplified field, and with a residual as small as the one here
(\S\ref{sec:recovery}) there is little for the branch-aware term to recover in exchange. We
therefore keep the recovery stage guidance-free and apply true CFG only at inference, where it is
unchanged at $\gamma = 4$; the finding is about where guidance is used during training, not about
whether the deployed sampler should use it. The choice should not be carried over to a recovery
problem in which the teacher's negative branch is privileged.

\section{Additional Results}

\subsection{Where the base model stands, under one protocol}
\label{app:external}

Table~\ref{tab:external} places our base model among recent instruction-based editing systems.
The purpose is not comparison with these systems --- they differ in scale, training data and
inference configuration, so none of them is a controlled counterpart to anything in
\S\ref{sec:exp} --- but to establish that the full-attention upper bound this paper chases is a
strong one rather than a weak model chosen to make recovery easy.

Two points about the protocol, since the numbers here are not the ones these systems report.
\textbf{Every row was generated and scored by us under one configuration}: the same judge and
version, the same prompt template, the same sampler and step count, the same resolution, and no
per-model tuning of guidance or negative prompts.
Consequently \emph{our own base model also scores below its published figure}, and by a margin
comparable to the other rows' shifts. This is the expected consequence of a uniform protocol.
Published scores are obtained under each system's own preferred settings, which is appropriate
for reporting a system but not for ranking several; a protocol that reproduced every published
number would be one that had been tuned per model, and the resulting ordering would carry that
tuning rather than the models. The absolute values in this table are therefore lower across the
board and are only meaningful \emph{relative to one another}.

For the same reason the scores here are not comparable to Table~\ref{tab:main} and Figure~\ref{fig:recovery_curves}, which use the OmniContext protocol of \S\ref{sec:exp-setup}; the
teacher appears in both at different values, and no row of this table should be read against a
row of those. Nothing in \S\ref{sec:exp} depends on this table: our claims are made against the
full-attention teacher on the same base model with the mask as the only variable, and that
comparison is unaffected by where the base model sits relative to other systems.

\begin{table}[t]
  \caption{Recent instruction-based editing systems, all generated and scored by us under a
  single protocol (see text). Our base model is \textbf{bold}. Absolute values differ from each
  system's published figures --- including our own base model's --- because no per-model tuning
  was applied; only the relative ordering is intended. \emph{Not comparable} to
  Table~\ref{tab:main} and Figure~\ref{fig:recovery_curves}, which use a different protocol.
  $^{\ast}$Evaluated at version 1.1; the cited report covers version 1.0.}
  \label{tab:external}
  \centering
  \begin{tabular}{lccc}
    \toprule
    Model & PF $\uparrow$ & SC $\uparrow$ & Overall $\uparrow$ \\
    \midrule
    FireRed-Image-Edit-1.1$^{\ast}$ \citep{firered2026imageedit}
                                                 & \textbf{8.252} & \textbf{7.503} & \textbf{7.762} \\
    \textbf{Qwen-Image-Edit-2511} (our base) \citep{wu2025qwenimage}
                                                 & 8.068 & 7.345 & 7.580 \\
    LongCat-Image-Edit \citep{longcat2025image}   & 8.041 & 7.358 & 7.554 \\
    Boogu-Image-0.1-Edit \citep{boogu2026image}   & 7.784 & 7.021 & 7.215 \\
    JoyAI-Image-Edit \citep{joyai2026image}       & 7.569 & 7.271 & 7.134 \\
    FLUX.2 [dev] \citep{flux2dev2025}             & 7.384 & 7.297 & 7.073 \\
    \bottomrule
  \end{tabular}
\end{table}

\subsection{A batch of GEdit-Bench examples}
\label{app:gedit-qualitative}
Figures~\ref{fig:gedit-qual-a} and~\ref{fig:gedit-qual-b} show ten GEdit-Bench examples under
the same three-column comparison, with the instruction printed above each row. Unlike the figures
above, these are shown as a contiguous batch rather than as selected cases, so they also include
instructions on which the three models are hard to tell apart; that is the point of showing a batch.
\begin{figure}[p]
  \centering
  \includegraphics[width=\linewidth,trim={0 0 0 940bp},clip]{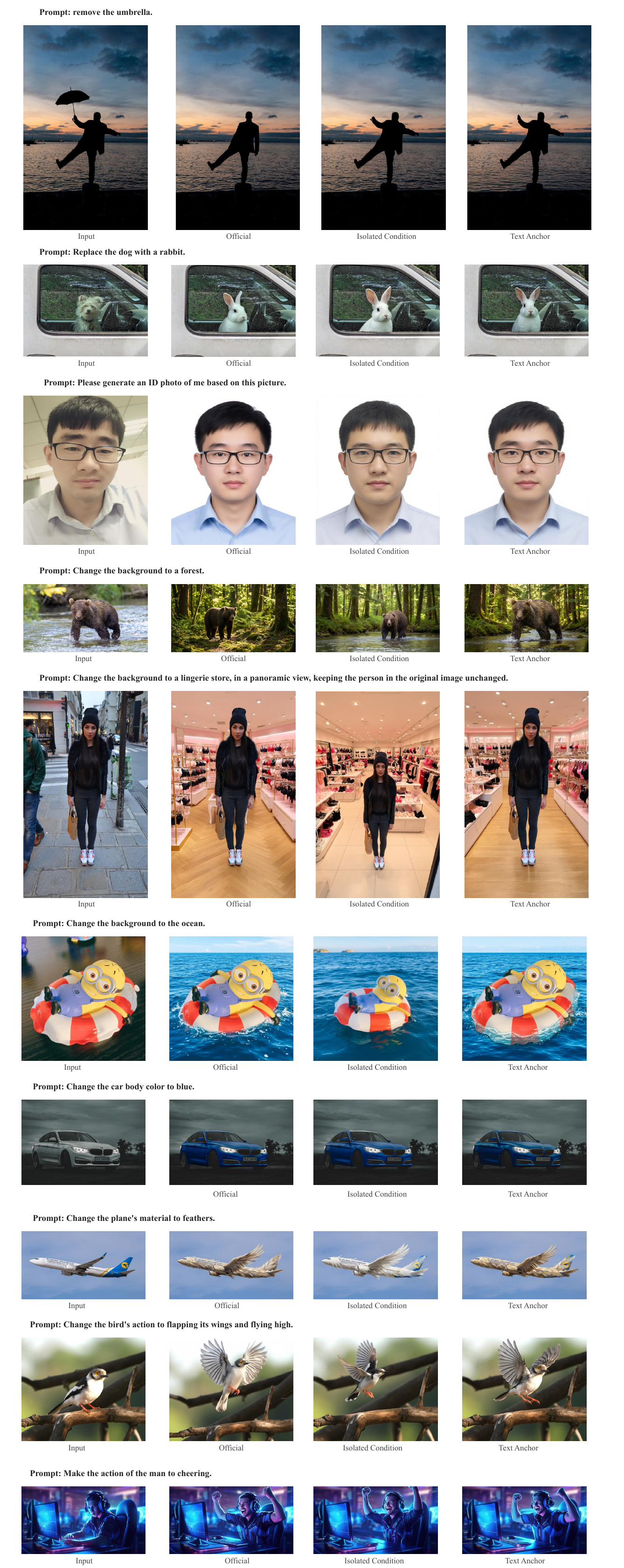}
  \vspace{-6pt}
  \caption{GEdit-Bench examples, part 2 of 2 (last five of ten; continued from
  Figure~\ref{fig:gedit-qual-a}), in the same column layout.}
  \label{fig:gedit-qual-b}
\end{figure}
\begin{figure}[p]
  \centering
  \includegraphics[width=\linewidth,trim={0 673bp 0 0},clip]{figures/gedit_qualitative.pdf}
  \vspace{-6pt}
  \caption{GEdit-Bench examples, part 1 of 2 (first five of ten; continued in
  Figure~\ref{fig:gedit-qual-b}). Columns: input, the full-attention teacher
  (\emph{Official}), the isolated condition, and ours (\emph{Text Anchor}). The instruction is
  printed above each row.}
  \label{fig:gedit-qual-a}
\end{figure}

\end{document}